\theoremstyle{plain}
\newtheorem{theorem}{Theorem}[section]
\newtheorem{proposition}[theorem]{Proposition}
\theoremstyle{definition}
\newtheorem{definition}[theorem]{Definition}
\theoremstyle{remark}
\icmltitlerunning{GraphCleaner}
\begin{document}
\newtheorem{defn}{Definition}[section]
\newtheorem{thm}{Theorem}[section]
\newtheorem{prop}{Proposition}[section]
\newtheorem{problem}{Problem}


\newcommand{\hide}[1]{}
\newcommand{\todo}[1]{{\textsf{\textcolor{blue}{{ [#1]}}}}}
\newcommand{\reminder}[1]{{\textsf{\textcolor{red}{[#1]}}}}
\newcommand{\atn}[1]{\textcolor{red}{#1}}
\newcommand{\notice}[1]{\textcolor{blue}{#1}}
\newcommand{\cmt}[1]{\emph{\textcolor{cyan}{#1}}}



\newcommand{\method}{\textsc{GraphCleaner}\xspace}  
\newcommand{\agroperator}{\textsc{Agr}\xspace}

\newcommand{\predonly}{\textit{ProbOnly}\xspace}
\newcommand{\neionly}{\textit{NeighOnly}\xspace}

\newcommand{\kdtree}{KD-tree\xspace}
\newcommand{\trustscore}{trustworthiness score\xspace}
\newcommand{\jiangtrustscore}{Trust Score\xspace}
\newcommand{\mislabtsk}{mislabel detection task\xspace}
\newcommand{\aggoperator}{\textsc{Agg}\xspace}

\newcommand{\ReturnN}[1]{\State \textbf{return} #1}

\twocolumn[
\icmltitle{GraphCleaner: Detecting Mislabelled Samples in Popular Graph Learning Benchmarks}



\icmlsetsymbol{equal}{*}

\begin{icmlauthorlist}
\icmlauthor{Yuwen Li}{comp}
\icmlauthor{Miao Xiong}{ids}
\icmlauthor{Bryan Hooi}{comp,ids}
\end{icmlauthorlist}

\icmlaffiliation{comp}{School of Computing, National University of Singapore, Singapore}
\icmlaffiliation{ids}{Institute of Data Science, National University of Singapore, Singapore}

\icmlcorrespondingauthor{Yuwen Li}{yuwenli@u.nus.edu}

\icmlkeywords{Machine Learning, ICML}

\vskip 0.3in
]



\printAffiliationsAndNotice{}  

\begin{abstract}

Label errors have been found to be prevalent in popular text, vision, and audio datasets, which heavily influence the safe development and evaluation of machine learning algorithms.
Despite increasing efforts towards improving the quality of generic data types, such as images and texts, the problem of mislabel detection in graph data remains underexplored. 
To bridge the gap, we explore mislabelling issues in popular real-world graph datasets and propose \textsc{GraphCleaner}, a post-hoc method to detect and correct these mislabelled nodes in graph datasets. 
\textsc{GraphCleaner} combines the novel ideas of 1) \emph{Synthetic Mislabel Dataset Generation}, which seeks to generate realistic mislabels; and 2) \emph{Neighborhood-Aware Mislabel Detection}, where neighborhood dependency is exploited in both labels and base classifier predictions.
Empirical evaluations on 6 datasets and 6 experimental settings demonstrate that \textsc{GraphCleaner} outperforms the closest baseline, with an average improvement of $0.14$ in F1 score, and $0.16$ in MCC. 
On real-data case studies, \textsc{GraphCleaner} detects real and previously unknown mislabels in popular graph benchmarks: \texttt{PubMed}, \texttt{Cora}, \texttt{CiteSeer} and \texttt{OGB-arxiv}; we find that at least $6.91\%$ of \texttt{PubMed} data is mislabelled or ambiguous, and simply removing these mislabelled data can boost evaluation performance from 86.71\% to 89.11\%\footnote{Corrected datasets and code are available at https://github.com/lywww/GraphCleaner/tree/master.}.
\end{abstract}

\section{Introduction}
\label{sec:intro}
Data is the primary input to any AI system, both for learning and evaluation. Hence, recognizing \emph{data quality} as a critically important factor for the success of AI systems, \emph{data-centric AI} has rapidly emerged in recent years. The vision of data-centric AI is to ensure clean and high-quality data in all phases of the project life-cycle and to develop tools and processes for monitoring and improving data quality. In particular, in supervised learning contexts, correcting label noise, including mislabelled and ambiguously labelled data, is of high priority due to the key importance of labels in the training and evaluation process.

\begin{figure*}[h]
    \centering
    \includegraphics[width=\linewidth]{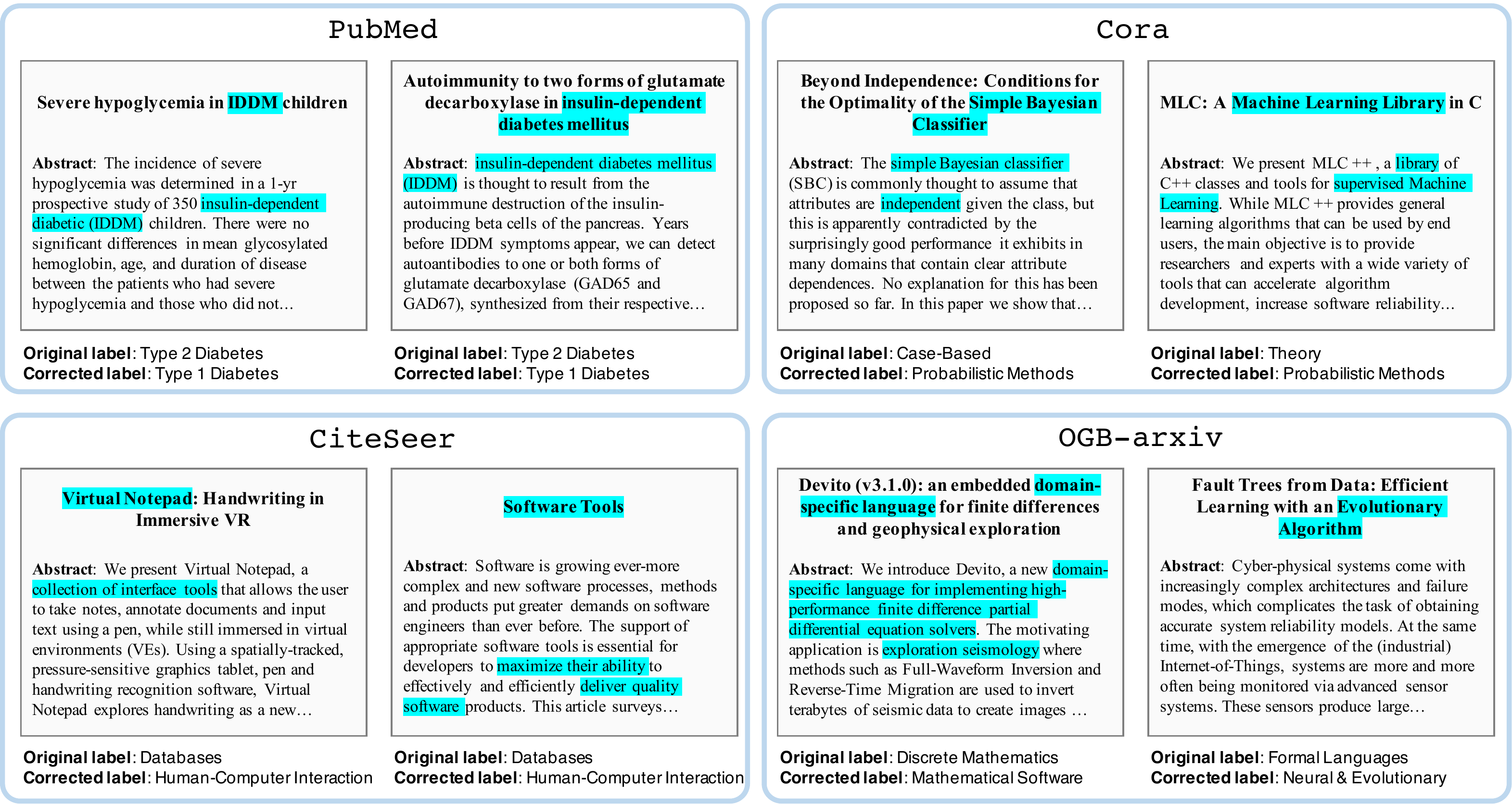}
    \caption{Examples of mislabelled samples detected by our approach in \texttt{PubMed}, \texttt{Cora}, \texttt{CiteSeer}, and \texttt{OGB-arxiv} respectively. Below each case, we give the original label given in the dataset, along with the corrected label suggested by our algorithm. We verify that all these cases are indeed mislabels, and highlight in blue the text that provides evidence for this. For \texttt{PubMed}, note that Insulin-Dependent Diabetes Mellitus (or IDDM) is synonymous with Type 1 Diabetes.
    }
    \label{fig:mislabels}
\end{figure*}

Recent work~\cite{northcutt2021pervasive} has shown that label errors are prevalent across machine learning benchmarks: they find a label error rate of 3.4\% on average, across 10 of the most popular natural language processing, computer vision, and audio datasets. However, label errors remain unexplored in the \emph{graph} setting. Graphs are widely used for representing entities and the relationships between them, with numerous applications such as molecules, financial networks, and social networks~\cite{wu2020comprehensive}. 
In this paper, we focus on the node classification setting, one of the most common graph learning tasks, 
and aim to answer the questions: \emph{are mislabels also common in this setting}, and \emph{can they be automatically detected and corrected}?



Existing solutions for mislabel detection \cite{arazo2019unsupervised,pleiss2020identifying,northcutt2021confident,zhu2022ICML} are designed for machine learning on generic data, such as images, audio, and text, where instances are largely seen as being independent of one another. However, this means that when used in a graph setting, these methods do not effectively exploit the close \emph{neighbor-dependence} between nodes and their neighbors, which is a key characteristic of graph data. In contrast, a key idea of our approach is that strong violations in the dependency patterns between a node and its neighbors act as an important signal that the node is more likely to be mislabelled. 

Hence, in this work, we propose \textsc{GraphCleaner}, a post-hoc framework for detecting and correcting mislabelled nodes on graph datasets.
To do this, \textsc{GraphCleaner} introduces the novel ideas of \emph{1) Synthetic Mislabel Dataset Generation}, where we first estimate a `mislabel transition matrix' describing patterns of how samples of different classes tend to be mislabelled, then use this transition matrix to synthesize realistically mislabelled samples in a class-dependent way. Next, these synthesized mislabelled samples are used as negative samples to train our \emph{2) Neighborhood-Aware Mislabel Detector} component, which is a binary classifier taking as input both the observed labels and the base classifier predictions, in a node's neighborhood. In this way, \textsc{GraphCleaner} exploits neighbor-dependence in graphs by learning to distinguish between the neighborhoods of correctly labelled and mislabelled nodes.

To evaluate its real-world utility and understand the implications of label errors, we conduct detailed case studies. On \texttt{PubMed}, we find that at least 6.91\% of the data is mislabelled or ambiguous, and simply removing these mislabelled data boosts evaluation performance from 86.71\% to 89.11\%. This validates the importance of data quality for performance and evaluation, and suggests that correcting mislabels has scope for significant value. Then, as shown in Figure \ref{fig:mislabels}, we apply our method to four widely used datasets to detect mislabels. To show the labels suggested by our algorithm are indeed correct, we verify in each case and provide evidence as highlighted in blue. Automatically detecting such samples can greatly speed up the efficiency of human manual checking in finding and correcting mislabels, and hence is our main goal in this work.

Our contributions can be summarized as follows:

\begin{compactitem}
\item We propose \textsc{GraphCleaner} to detect mislabels in graph data, and prove the theoretical guarantees regarding the mislabel score threshold. Unlike existing approaches, \method exploits graph's \emph{neighbor-dependence} patterns, by introducing a neighborhood-aware mislabel detector and the novel \emph{synthetic mislabel dataset generation} for better training the detector.

\item Extensive experiments on 6 graph datasets across 6 experimental settings show that \textsc{GraphCleaner} consistently outperforms state-of-the-art methods by an average margin of $0.14$ in F1 score, and $0.16$ in MCC.


\item Detailed case studies show real and previously unknown mislabels in four datasets, verifying \textsc{GraphCleaner}'s effectiveness in real-world applications. At least $6.91\%$ of \texttt{PubMed} data is found to be mislabelled or ambiguous, with significant implications for algorithm evaluation.


\item We publicly release 2 improved variants of \texttt{PubMed} dataset: \texttt{PubMedCleaned} and \texttt{PubMedMulti} for more accurate evaluation.
\end{compactitem}

\section{Related Work}
\begin{figure*}[t]
    \centering
    \includegraphics[width=\linewidth]{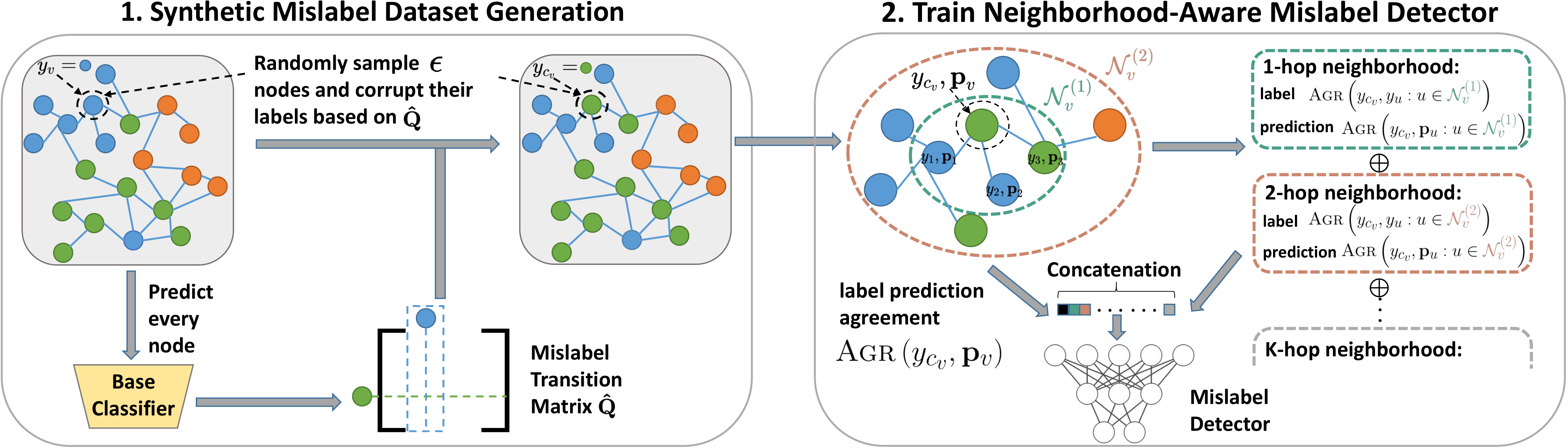}
    \caption{The framework of \textsc{GraphCleaner}. Different colors indicate different class labels. To train a mislabel detector, we first estimate the mislabel transition matrix $\hat{\mathbf{Q}}$ from the predictions of a 'base classifier', sample $\epsilon$-ratio of nodes and flip their classes $j$ to another class $i$ based on the probability of $\mathbf{\hat{Q}}_{\tilde{y}=i|y^*=j}$. Then for every node $v$, we measure the agreement between $v$ and its neighborhood within $K$ hops. The detailed design for 1-hop neighborhood $\mathcal{N}_v^{(1)}$ and 2-hop neighborhood $\mathcal{N}_v^{(2)}$ is illustrated.
    }
    \label{fig:framework}
\end{figure*}

We first review related explorations in graph neural networks, learning with noisy labels and confidence calibration. Then we summarize methods for mislabel detection. 

\paragraph{Graph Neural Networks}

Graph neural networks (GNN) have been a powerful tool for graph learning tasks. 
One line of research similar to ours utilizes label information, such as label propagation~\cite{wang2020unifying,10.1145/3394486.3403101} and integrating labels with features~\cite{shi2020masked}.
Another line is post-processing methods. Among them,  \citet{wang2021confident} tackles the graph-based calibration problem and \citet{huang2020combining} proposes the C\&S procedure. 
Due to the goal of mislabel detection, our framework differs, e.g., in its use of mislabel generation and neighbor-dependence assumption.

\paragraph{Confidence Calibration}

In machine learning, reliable uncertainty estimates are crucial for safe decision-making. However, \citet{pmlr-v70-guo17a} shows that modern neural networks suffer from \emph{over-confidence} issue, which \emph{Confidence Calibration} aims to address.
A well-calibrated model should align the output confidence score with the ground truth accuracy~\cite{Simple2016Laksh}. Popular methods include temperature scaling~\cite{pmlr-v70-guo17a}, Dirichlet calibration~\cite{kull2019beyond}, and Gaussian processes~\cite{pmlr-v108-wenger20a}, etc. 
Despite the shared interest in making decision-making safe, calibration algorithms adopt the model-centric view, while we focus on improving data quality, and we do not expect a straightforward mapping between confidence score and ground truth accuracy.

\paragraph{Learning with Noisy Labels}

Training reliable models in the presence of label noise~\cite{natarajan2013learning,algan2021image} is a closely related topic to mislabel detection, usually via modified training procedures~\cite{hoang2019learning,dai2021nrgnn}.
Some also explore neighborhood similarity~\cite{zhu2021clusterability, zhu2021second}, or utilize synthetic mislabels~\cite{xia2019anchor, jiang2021information}. 
Our \textsc{GraphCleaner} differs from them in its focus on non-i.i.d graph data, and how it obtains and use transition probabilities. 
Please refer to Appendix \ref{app:cmp_learn_with_noise} for specific differences.

\paragraph{Mislabel Detection}

Existing solutions for mislabel detection are mostly designed for generic data and typically come from three perspectives: training dynamics-based~\cite{arazo2019unsupervised,pleiss2020identifying}, joint distribution estimation of noisy and true labels~\cite{northcutt2021confident}, and neighborhood similarity~\cite{zhu2022ICML}.
Our method draws inspiration from \citet{northcutt2021confident} when estimating the mislabel transition matrix, 
but focuses on the graph setting and utilizes the neighborhood dependence property of graphs to deliver a post-hoc, plug-and-play solution.

\section{Proposed Framework}
\label{sec:method}

\subsection{Problem Definition} 

We aim to detect mislabelled nodes in a graph $G = (\mathcal{V}, \mathbf{X}, \mathbf{A})$, with node set $\mathcal{V}$ containing $n$ nodes, node feature matrix $\mathbf{X} \in \mathbb{R}^{n \times d}$ and adjacency matrix $\mathbf{A} \in \mathbb{R}^{n\times n}$. We assume that $G$ is undirected for simplicity, but our method can generalize straightforwardly to directed graphs. The unknown true label of node $v$ is denoted by $y_v^* \in [c]$, where $[c]:=\{1,2,\cdots,c\}$ is the class label set; the observed noisy label is denoted by $\tilde{y}_v \in [c]$. The node $v$ is said to be mislabelled if $y_v^* \neq \tilde{y}_v$. The node set $\mathcal{V}$ is partitioned into training, validation, and test sets, denoted by $\mathcal{V}_{\text{train}}$, $\mathcal{V}_{\text{val}}$, and $\mathcal{V}_{\text{test}}$.

Given a graph $G$, we aim to answer the following two fundamental questions: 
\begin{compactitem}
    \item \textbf{Identify}: Which nodes are mislabelled?
    \item \textbf{Correct}: What are their unknown true labels $y^*$?
\end{compactitem}

\paragraph{Post-hoc Setting} Our work focuses on the post-hoc setting, where we are given a \emph{pretrained base classifier} $f_\theta(\mathbf{X}, \mathbf{A})$, e.g., any graph neural network used as a node classifier, which is trained on the training set $\mathcal{V}_{\text{train}}$ with the observed noisy labels $\tilde{y}$. 

\subsection{\method: Overview}

In order to determine if a sample is mislabelled, we need a good mislabel detector that is able to capture the behavior of mislabels. We specify this detector as a neural network, which leads to the question of how to construct a training dataset representative of the real world. Unlike other problems, such as out-of-distribution detection, where the outlier distribution is unknown, mislabel detection has the advantage that the outlier distribution is well-defined and can be easily simulated by flipping the labels. Motivated by this, we tackle the mislabel detection task through two steps: \emph{generating synthetic mislabel dataset} and \emph{training neighborhood-aware mislabel detector}. The framework is illustrated in Figure \ref{fig:framework}. 

To obtain mislabelled data that captures the characteristics of real mislabels, we first estimate the mislabel transition matrix $\mathbf{\hat{Q}}$ using a base classifier. 
The mislabel transition matrix is then utilized to generate the balanced corrupted dataset.
Next, the \emph{Mislabel Detector} is trained on the corrupted dataset by capturing neighborhood-aware features.

\subsection{Synthetic Mislabel Dataset Generation}

There is no ground truth about if a sample is mislabelled. To train a mislabel detector, we first need to synthesize mislabelled samples to obtain ground truth labels. Our approach generates synthetic mislabels in a \emph{class-aware} way: the label noise is class-dependent, meaning that the probability of mislabelling a node as class $i$ is dependent on the node's actual class $j$. For example, an image of a little tiger is more easily mislabelled as a cat than a dog. The overall procedure is summarized in Algorithm \ref{alg:syn-mislabel}. 

\paragraph{Learning Mislabel Transition Matrix}
To generate class-dependent label noise, we first learn a \emph{mislabel transition matrix} $\mathbf{\hat{Q}}$ to capture the probability of mislabelling one class label to another class. Specifically, $\mathbf{\hat{Q}}_{\tilde{y}=i|y^*=j}$ denotes the probability of a sample with unknown true class $j$ being mislabelled with an observed class $i$.


We estimate the mislabel transition matrix using a pretrained base classifier evaluated on validation set data, following the same assumption outlined in \citet{northcutt2021confident}. The underlying assumption is that predictions given by the base classifier with \emph{high confidence} are very likely to match the true labels. Therefore, we consider predictions with confidence scores no less than a threshold (see Appendix~\ref{sec:app} Equation~\ref{eq:threshold}) to be correct. Under this assumption, we count the number of samples with true label $j$ and observed noisy label $i$ as $\mathbf{C}_{\tilde{y}=i,y^*=j}$. This joint distribution can then be transformed into an estimate of the mislabel transition matrix $\mathbf{\hat{Q}}_{\tilde{y}=i|y^*=j}$, by an application of Bayes' theorem. We refer you to Appendix~\ref{sec:app} for more details and formal definitions of this process.

\paragraph{Sampling Mislabels}
To generate a balanced training set for training our mislabel detector, we uniformly sample half of the nodes in the validation set to be synthetic mislabels. We denote the set of sampled nodes as $\mathcal{V}_{\text{synth}} \subseteq \mathcal{V}_{\text{val}}$.

Then, nodes in $\mathcal{V}_{\text{synth}}$ will have their labels randomly flipped according to the mislabel transition matrix: specifically, the probability of changing a node's given label from $j$ to a different label $i$ is $\mathbf{\hat{Q}}_{\tilde{y}=i|y^*=j}$. Effectively, this process simulates realistic label noise, using our best estimate of the distribution of label noise in the original data. We denote the resulting \emph{corrupted label matrix} as $\mathbf{Y}_c \in \mathbb{R}^{n \times c}$, which is the one-hot label matrix after nodes in $\mathcal{V}_{\text{synth}}$ have been flipped according to the above process. For further discussion, see Appendix \ref{app:discuss_samp_mislabel}.

\begin{algorithm}[h]
   \caption{Synthetic Mislabel Dataset Generation}
   \label{alg:syn-mislabel}
   {\bfseries Input:} graph $G\!=\!(\mathcal{V}, \mathbf{X}, \mathbf{A})$ with $\mathcal{V}\!=\!(\mathcal{V}_{train}, \mathcal{V}_{val}, \mathcal{V}_{test})$, base classifier $f_\theta$, sample ratio $\epsilon$.

   {\bfseries Output:} corrupted graph $G_c = (\mathcal{V}_c, \mathbf{X}, \mathbf{A})$ with nodes in $\mathcal{V}_{\text{synth}}$ having flipped labels
   
   {\bfseries Notations:} 
   confident joint $\mathbf{C}_{\tilde{y},y^*}$, threshold $t_c$ for class $c$, noise transition matrix $\mathbf{\hat{Q}}_{\tilde{y}|y^*}$.
   
\begin{algorithmic}[1]
   \STATE Train $f_\theta$ on $\mathcal{V}_{train}$
   \STATE Calculate $t_c = \frac{1}{\left| \mathbf{X}_{\tilde{y}=c} \right|} \mathop{\sum}\limits_{x \in \mathbf{X}_{\tilde{y}=c}} \hat{p} \left( \tilde{y}=c ; x, \theta \right)$, where $\hat{p}$ is $f_\theta$'s predictions
   \FOR{$v$ {\bfseries in} $\mathcal{V}_{val}$}
   \STATE $y_v \leftarrow f_\theta(v)$ 
   \STATE $P_v \leftarrow$ $f_\theta$'s softmax prediction on $v$
   \IF{$y_v \neq \tilde{y}_v$ {\bfseries and} $P_v(y_v) \geq t_{y_v}$}
   \STATE $\mathbf{C}_{\tilde{y}=\tilde{y}_v,y^*=y_v}  \mathrel{+}= 1$
   \ENDIF
   \ENDFOR
   \STATE $\mathbf{\hat{Q}}_{\tilde{y},y^*} \leftarrow$ normalized $\mathbf{C}_{\tilde{y},y^*}$ \hfill \cmt{$\triangleright$ joint distribution}
   \STATE $\mathbf{\hat{Q}}_{\tilde{y}|y^*} \leftarrow \mathbf{\hat{Q}}_{\tilde{y},y^*} / \mathbf{\hat{Q}}_{y^*}$ \hfill \cmt{$\triangleright$ conditional probability}
   \STATE $\mathcal{V}_{\text{synth}} \leftarrow$ sampled data from $\mathcal{V}_{val}$ by $\epsilon$
   \FOR{$v$ {\bfseries in} $\mathcal{V}_{\text{synth}}$}
   \STATE Flip $v$'s label according to the transition probability distribution $\mathbf{\hat{Q}}_{\tilde{y}|y^*=\tilde{y}_{v}}$
   \ENDFOR
\end{algorithmic}
\end{algorithm}

\subsection{Neighborhood-Aware Mislabel Detector}
\label{sec:mislabel_detector}

To better detect mislabelled data, we leverage a useful and commonly-held assumption of \emph{neighbor-dependence}: the ground truth label of one node tends to agree with the labels of its neighbors.
That is, if a node strongly disagrees with its neighborhood in terms of labels, the node has a relatively high risk of being mislabelled. In addition, the base classifier's softmax predictions also carry important information about the unknown true labels. 

Motivated by these two factors, our mislabel detector intuitively focuses on the \emph{agreement between a sample's label, and the labels and base classifier predictions in its neighborhood}, to decide if the sample is mislabelled. The procedure is summarized in Algorithm \ref{alg:mislabel-detector}, and more details can be found in Appendix \ref{app:neighborhood_agreement}.

\begin{algorithm}[h]
   \caption{Neighborhood-Aware Mislabel Detector}
   \label{alg:mislabel-detector}
   {\bfseries Input:} corrupted graph $G_c = (\mathcal{V}_c, \mathbf{X}, \mathbf{A})$ with $\mathbf{D}$ as its diagonal degree matrix, softmax prediction matrix $\mathbf{P}$, original label matrix $\mathbf{Y}$, corrupted label matrix $\mathbf{Y}_c$, number of hops $K$.

   {\bfseries Output: } the trained mislabel detector
   
   {\bfseries Notations:} neighborhood agreement features $\mathbf{Z}$
   
\begin{algorithmic}[1]
   \STATE $\tilde{\mathbf{A}} \leftarrow \mathbf{D}^{-1/2} \mathbf{A} \mathbf{D}^{-1/2}$
   \STATE $\mathbf{Z} \leftarrow \agroperator (\mathbf{Y}_c, \mathbf{P}) $
   \FOR{$k = 1 \rightarrow K$}
   \STATE $\mathbf{S}_k \leftarrow \mathsf{zero}(\tilde{\mathbf{A}}^k)$
   \STATE $\mathbf{Y}^{(k)} \leftarrow \mathbf{S}_k \cdot \mathbf{Y}$
   \STATE $\mathbf{P}^{(k)} \leftarrow \mathbf{S}_k \cdot \mathbf{P}$
   \STATE $\mathbf{Z} = \mathbf{Z} \oplus \mathbf{Y}^{(k)} \oplus \mathbf{P}^{(k)}$
   \ENDFOR
   \STATE Train the mislabel detector with $\mathbf{Z}_{val}$, $\mathbf{Y}_{val_c}$
   \STATE Apply the mislabel detector on $\mathbf{Z}_{test}$
\end{algorithmic}
\end{algorithm}

\paragraph{Neighborhood Extraction}
We now describe how we extract the labels and base classifier predictions from the neighborhood of each node. Define the normalized adjacency matrix as $\tilde{\mathbf{A}}:=\mathbf{D}^{-1/2} \mathbf{A} \mathbf{D}^{-1/2}$ where $\mathbf{D}$ is the diagonal matrix of node degrees. 
\begin{definition}
The $k$-hop propagation matrix is defined as 
\begin{align}
\mathbf{S}_k := \mathsf{zero}(\tilde{\mathbf{A}}^k) \in \mathbb{R}^{n \times n},
\end{align}
where the $\mathsf{zero}(\cdot)$ operation indicates zeroing-out all the diagonal entries of the matrix. 
\end{definition}

Multiplying any signal by $\mathbf{S}_k$ propagates it over $k$-hop neighborhoods. The $\mathsf{zero}(\cdot)$ operation is essential for avoiding \emph{label leakage}, where information about the target labels influences the input to a classifier. 

Then, to extract label information in each node's $k$-hop neighborhood, we start with the label matrix $\mathbf{Y} \in \mathbb{R}^{n \times c}$, and propagate it over $k$ hops by computing $\mathbf{Y}^{(k)} := \mathbf{S}_k \cdot \mathbf{Y} \in \mathbb{R}^{n \times c}$ for any $k$. In order to extract the base classifier's predictions in such neighborhoods, we do the same propagation steps on the softmax prediction matrix $\mathbf{P} \in \mathbb{R}^{n \times c}$, obtaining $\mathbf{P}^{(k)}:=\mathbf{S}_k \cdot \mathbf{P} \in \mathbb{R}^{n \times c}$. Intuitively, $\mathbf{Y}$ and $\mathbf{P}$ are smoothed by information in $k$-hop neighborhood, yielding $\mathbf{Y}^{(k)}$ and $\mathbf{P}^{(k)}$, which have greater expression capability by encoding neighborhood information.

\paragraph{Neighborhood Agreement Features}
Recall that $\mathbf{Y}$ contains the original observed labels, while $\mathbf{Y}_c$ contains the corrupted labels where we flip the labels of nodes in $\mathcal{V}_{\text{synth}}$. 
To better exploit neighbor-dependence, we employ an agreement operator $\agroperator: \mathbb{R}^{n\times c} \times \mathbb{R}^{n\times c} \mapsto \mathbb{R}^{n\times c}$, which we will use to measure the \emph{agreement} between the (possibly corrupted) node's own label $\mathbf{Y}_c$, and three quantities from the above Neighborhood Extraction process: 1) the model predictions $\mathbf{P}$; 2) the neighborhood-propagated predictions $\mathbf{P}^{(k)}$; and 3) the neighborhood-propagated labels $\mathbf{Y}^{(k)}$, for every hop $k \in [K]$. 


There are various ways to gauge such agreement and a simple way is to take \emph{dot products} $\ominus$ as the agreement measure \agroperator. Accordingly, define the \emph{row-wise dot product}, denoted $\mathbf{U}\ominus \mathbf{V}$ for any same-sized matrices $\mathbf{U}$ and $\mathbf{V}$, as the column vector whose $i$-th entry is the dot product between the $i$-th rows of $\mathbf{U}$ and $\mathbf{V}$. 
In this way, we construct the input feature matrix $\mathbf{Z}$ to our mislabel detector by concatenating such agreement terms\footnote{Using both original and corrupted label matrices ($\mathbf{Y}$ and $\mathbf{Y}_c$) in generating the neighborhood agreement features is an important design choice to ensure that synthetic mislabels only corrupt their own features, not the features of other nodes; see Appendix \ref{app:neighborhood_agreement}.}:

\begin{scriptsize}
\begin{align*}
    \mathbf{Z} \! = \! \! \left[\! \mathbf{Y}_c \! \ominus \! \mathbf{P},   \underbrace{\mathbf{Y}_c \! \ominus \! \mathbf{P}^{(1)}, \cdots , \mathbf{Y}_c \! \ominus \! \mathbf{P}^{(K)}}_K,
    \underbrace{\mathbf{Y}_c \! \ominus \! \mathbf{Y}^{(1)} , \cdots , \mathbf{Y}_c \! \ominus \! \mathbf{Y}^{(K)}}_K
    \!\right]
\end{align*}
\end{scriptsize}


\paragraph{Mislabel Detector} We train a Multi-Layer Perceptron (MLP) model on the validation set as the mislabel detector, with input features $\mathbf{Z}$, and output as a binary variable indicating whether the node is a synthetic mislabelled node. L1 loss $L(x,y)=\frac{1}{N} \sum_{i=1}^{n} |x_i-y_i|$ is adopted for training, motivated by the finding in \citet{hu2022understanding} that L1 loss has stronger robustness and smaller calibration error than the commonly used cross entropy loss.
Mislabel detection is closely related to calibration, since both tasks expect the softmax probability associated with the predicted class label to reflect its ground truth correctness likelihood.

\paragraph{Inference} For the target graph $G = (\mathcal{V}, \mathbf{X}, \mathbf{A})$ with label matrix $\mathbf{Y}$, we construct the softmax matrix $\mathbf{P}$ using a base classifier and do $K$-hop neighborhood propagation to obtain $\mathbf{P}^{(k)}$ and $\mathbf{Y}^{(k)}$ for $k \in [K]$. Note that the corrupted label matrix $\mathbf{Y}_c$ is only needed at training time; at test time, the observed label matrix $\mathbf{Y}$ is used in its place. For every mislabelled node, its unknown true label $y^*$ is estimated using the base classifier's prediction $\hat{y}^* = \text{argmax}_i \ p(y=i \ |\ \mathbf{x})$. For the time complexity of our \method, see Appendix \ref{app:complexity}.

\paragraph{Remark} Both our mislabel transition matrix and mislabel detector are estimated on $\mathcal{V}_{\text{val}}$. There are three advantages in doing so. 1) $\mathcal{V}_{\text{val}}$ is more representative of $\mathcal{V}_{\text{test}}$, avoiding overfitting on $\mathcal{V}_{\text{train}}$; 2) We can easily adapt to distribution shift away from the training distribution; 3) We only need a small amount of data to train the mislabel detector, which requires fewer parameters and improves efficiency.

\begin{table*}[th]
\centering
\caption{Mislabel detection accuracy of our \method and other methods across 6 graph benchmarks and 6 noise settings. The percentage in the first column indicates the mislabel ratio $\epsilon$, while `sym' and `asym' refer to symmetric and asymmetric noise. The best results are emphasized in bold, and * indicates a statistically significant ($p < 0.01$) difference between the best and the second best result according to T-test. The last row gives the average improvement of \method's result over the second best. If \method is not the best, it is compared to the best method. }
\scalebox{.6}{
\begin{tabular}{l|l|lll|lll|lll|lll|lll|lll}
\toprule[1.2pt]
~ & \multirow{2}*{Method} & \multicolumn{3}{c|}{\texttt{Cora}} & \multicolumn{3}{c|}{\texttt{CiteSeer}} & \multicolumn{3}{c|}{\texttt{PubMed}} & \multicolumn{3}{c|}{\texttt{Computers}} & \multicolumn{3}{c|}{\texttt{Photo}} & \multicolumn{3}{c}{\texttt{OGB-arxiv}} \\
~ &     ~ &      F1 &     MCC &   P@T &      F1 &     MCC &   P@T &      F1 &     MCC &   P@T &      F1 &     MCC &   P@T &      F1 &     MCC &   P@T &      F1 &     MCC &   P@T \\
\midrule
\midrule
\multirow{5}*{\begin{tabular}{@{}c@{}}10\% \\ sym\end{tabular}} & baseline &  0.269 &  0.361 &  0.158 &  0.287 &  0.289 &  0.195 &  0.505 &  0.465 &  0.441 &  0.211 &  0.325 &  0.118 &  0.213 &  0.322 &  0.120 &  0.047 &  0.138 &  0.024 \\
~ & DYB      &  0.455 &  0.467 &  0.321 &  0.297 &  0.270 &  0.167 &  0.197 &  0.045 &  0.179 &  0.755 &  0.745 &  \textbf{0.843} &  0.777 &  0.770 &  0.884 &  0.500 &  0.496 &  0.705 \\
~ & AUM      &  0.235 &  0.199 &  0.620 &  0.202 &  0.121 &  0.424 &  0.183 &  0.046 &  $\textbf{0.678}^{*}$ &  0.181 &  0.026 &  0.762 &  0.179 &  0.040 &  0.839 &  0.366 &  0.365 &  0.425 \\
~ & CL       &  0.560 &  0.513 &  0.570 &  0.432 &  0.386 &  0.459 &  0.447 &  0.388 &  0.483 &  0.657 &  0.619 &  0.692 &  0.776 &  0.755 &  0.733 &  0.303 &  0.292 &  0.383 \\
~ & Ours     &  $\textbf{0.790}^{*}$ &  $\textbf{0.773}^{*}$ &  $\textbf{0.803}^{*}$ &  $\textbf{0.560}^{*}$ &  $\textbf{0.535}^{*}$ &  $\textbf{0.504}^{*}$ &  $\textbf{0.616}^{*}$ &  $\textbf{0.576}^{*}$ &  0.627 &  $\textbf{0.844}^{*}$ &  $\textbf{0.830}^{*}$ &  0.840 &  $\textbf{0.902}^{*}$ &  $\textbf{0.893}^{*}$ &  \textbf{0.897} &  $\textbf{0.760}^{*}$ &  $\textbf{0.744}^{*}$ &  $\textbf{0.809}^{*}$ \\

\midrule
\multirow{5}*{\begin{tabular}{@{}c@{}}10\% \\ asym\end{tabular}} & baseline &  0.141 &  0.219 &  0.078 &  0.153 &  0.170 &  0.094 &  0.366 &  0.324 &  0.296 &  0.006 &  0.046 &  0.003 &  0.000 &  -0.009 &  0.000 &  0.003 &  0.025 &  0.002 \\
~ & DYB      &  0.404 &  0.397 &  0.283 &  0.289 &  0.246 &  0.174 &  0.278 &  0.234 &  0.242 &  0.625 &  0.626 &  0.625 &  0.678 &  0.675 &  0.744 &  0.310 &  0.251 &  0.320 \\
~ & AUM      &  0.223 &  0.169 &  0.577 &  0.199 &  0.111 &  0.385 &  0.183 &  0.048 &  $\textbf{0.657}^{*}$ &  0.182 &  0.033 &  $\textbf{0.726}^{*}$ &  0.179 &  0.037 &  0.782 &  0.252 &  0.145 &  0.357 \\
~ & CL       &  0.556 &  0.511 &  0.542 &  0.419 &  0.365 &  $\textbf{0.431}^{*}$ &  0.456 &  0.402 &  0.431 &  0.671 &  0.635 &  0.693 &  0.760 &  0.735 &  0.737 &  0.274 &  0.232 &  0.322 \\
~ & Ours     &  $\textbf{0.669}^{*}$ &  $\textbf{0.633}^{*}$ &  $\textbf{0.647}^{*}$ &  $\textbf{0.475}^{*}$ &  $\textbf{0.424}^{*}$ &  0.398 &  $\textbf{0.565}^{*}$ &  $\textbf{0.516}^{*}$ &  0.545 &  $\textbf{0.778}^{*}$ &  $\textbf{0.757}^{*}$ &  0.692 &  $\textbf{0.832}^{*}$ &  $\textbf{0.817}^{*}$ &  $\textbf{0.797}^{*}$ &  $\textbf{0.477}^{*}$ &  $\textbf{0.415}^{*}$ &  $\textbf{0.452}^{*}$ \\

\midrule
\multirow{5}*{\begin{tabular}{@{}c@{}}5\% \\ sym\end{tabular}} &baseline &  0.185 &  0.262 &  0.108 &  0.256 &  0.278 &  0.171 &  0.443 &  0.414 &  0.453 &  0.199 &  0.317 &  0.111 &  0.147 &  0.261 &  0.081 &  0.053 &  0.152 &  0.028 \\
~ & DYB      &  0.276 &  0.340 &  0.257 &  0.190 &  0.229 &  0.062 &  0.129 &  0.100 &  0.151 &  0.585 &  0.611 &  0.699 &  0.644 &  0.670 &  0.760 &  0.275 &  0.332 &  0.612 \\
~ & AUM      &  0.132 &  0.157 &  0.532 &  0.108 &  0.093 &  0.381 &  0.095 &  0.032 &  \textbf{0.557} &  0.093 &  0.015 &  0.616 &  0.091 &  0.030 &  0.760 &  0.212 &  0.266 &  0.302 \\
~ & CL       &  0.477 &  0.456 &  0.506 &  0.324 &  0.340 &  0.273 &  0.310 &  0.303 &  0.320 &  0.572 &  0.560 &  0.628 &  0.736 &  0.723 &  0.743 &  0.165 &  0.198 &  0.264 \\
~ & Ours     &  $\textbf{0.661}^{*}$ &  $\textbf{0.671}^{*}$ &  $\textbf{0.719}^{*}$ &  $\textbf{0.406}^{*}$ &  $\textbf{0.425}^{*}$ &  \textbf{0.396} &  \textbf{0.477} &  $\textbf{0.483}^{*}$ &  0.551 &  $\textbf{0.724}^{*}$ &  $\textbf{0.728}^{*}$ &  $\textbf{0.754}^{*}$ &  $\textbf{0.841}^{*}$ &  $\textbf{0.843}^{*}$ &  $\textbf{0.818}^{*}$ &  $\textbf{0.596}^{*}$ &  $\textbf{0.615}^{*}$ &  $\textbf{0.747}^{*}$ \\

\midrule

\multirow{5}*{\begin{tabular}{@{}c@{}}5\% \\ asym\end{tabular}} & baseline &  0.157 &  0.235 &  0.089 &  0.232 &  0.257 &  0.152 &  0.252 &  0.213 &  0.251 &  0.000 &  0.000 &  0.000 &  0.000 & -0.006 &  0.000 &  0.005 &  0.032 &  0.003 \\
~ & DYB      &  0.270 &  0.329 &  0.266 &  0.174 &  0.203 &  0.062 &  0.158 &  0.173 &  0.159 &  0.509 &  0.553 &  0.545 &  0.538 &  0.578 &  0.639 &  0.216 &  0.245 &  0.304 \\
~ & AUM      &  0.125 &  0.132 &  0.500 &  0.110 &  0.100 &  0.323 &  0.095 &  0.032 &  $\textbf{0.541}^{*}$ &  0.129 &  0.066 &  0.599 &  0.091 &  0.028 &  0.742 &  0.095 &  0.004 &  0.266 \\
~ & CL       &  0.512 &  0.489 &  0.517 &  0.306 &  0.325 &  0.223 &  0.311 &  0.315 &  0.206 &  0.576 &  0.567 &  0.597 &  0.755 &  0.743 &  \textbf{0.757} &  0.154 &  0.168 &  0.223 \\
~ & Ours     &  $\textbf{0.586}^{*}$ &  $\textbf{0.590}^{*}$ &  $\textbf{0.617}^{*}$ &  $\textbf{0.364}^{*}$ &  $\textbf{0.372}^{*}$ &  \textbf{0.350} &  $\textbf{0.454}^{*}$ &  $\textbf{0.449}^{*}$ &  0.486 &  $\textbf{0.695}^{*}$ &  $\textbf{0.700}^{*}$ &  \textbf{0.608} &  \textbf{0.762} &  \textbf{0.758} &  0.733 &  $\textbf{0.457}^{*}$ &  $\textbf{0.446}^{*}$ &  $\textbf{0.435}^{*}$ \\

\midrule
\multirow{5}*{\begin{tabular}{@{}c@{}}2.5\% \\ sym\end{tabular}} & baseline &  0.264 &  0.327 &  0.167 &  0.223 &  0.215 &  0.191 &  \textbf{0.393} &  0.392 &  0.463 &  0.163 &  0.272 &  0.091 &  0.122 &  0.210 &  0.068 &  0.051 &  0.143 &  0.026 \\
~ & DYB      &  0.133 &  0.220 &  0.162 &  0.090 &  0.147 &  0.000 &  0.086 &  0.140 &  0.204 &  0.321 &  0.404 &  0.382 &  0.349 &  0.440 &  0.441 &  0.137 &  0.221 &  0.440 \\
~ & AUM      &  0.061 &  0.103 &  0.448 &  0.054 &  0.078 &  \textbf{0.250} &  0.048 &  0.021 &  \textbf{0.596} &  0.046 &  0.012 &  0.489 &  0.044 &  0.018 &  0.747 &  0.114 &  0.189 &  0.203 \\
~ & CL       &  0.402 &  0.409 &  0.476 &  0.166 &  0.216 &  0.186 &  0.221 &  0.274 &  0.300 &  0.424 &  0.446 &  0.511 &  0.698 &  0.699 &  0.738 &  0.085 &  0.138 &  0.169 \\
~ & Ours     &  $\textbf{0.445}^{*}$ &  $\textbf{0.492}^{*}$ &  $\textbf{0.586}^{*}$ &  \textbf{0.237} &  $\textbf{0.303}^{*}$ &  0.245 &  0.385 &  $\textbf{0.452}^{*}$ &  0.562 &  $\textbf{0.555}^{*}$ &  $\textbf{0.594}^{*}$ &  $\textbf{0.658}^{*}$ &  \textbf{0.709} &  \textbf{0.729} &  \textbf{0.753} &  $\textbf{0.417}^{*}$ &  $\textbf{0.484}^{*}$ &  $\textbf{0.626}^{*}$ \\

\midrule
\multirow{5}*{\begin{tabular}{@{}c@{}}2.5\% \\ asym\end{tabular}} & baseline &  0.136 &  0.183 &  0.081 &  0.112 &  0.108 &  0.086 &  0.237 &  0.226 &  0.225 &  0.000 &  0.000 &  0.000 &  0.000 & -0.004 &  0.000 &  0.005 &  0.032 &  0.003 \\
~ & DYB      &  0.139 &  0.234 &  0.167 &  0.070 &  0.099 &  0.000 &  0.083 &  0.125 &  0.125 &  0.328 &  0.421 &  0.377 &  0.320 &  0.409 &  0.344 &  0.127 &  0.197 &  0.250 \\
~ & AUM      &  0.063 &  0.104 &  0.424 &  0.053 &  0.073 &  \textbf{0.218} &  0.063 &  0.049 &  $\textbf{0.583}^{*}$ &  0.069 &  0.048 &  0.488 &  0.057 &  0.045 &  0.709 &  0.055 &  0.021 &  0.184 \\
~ & CL       &  0.411 &  0.418 &  0.438 &  0.162 &  0.207 &  0.145 &  0.225 &  0.289 &  0.208 &  0.428 &  0.456 &  0.471 &  $\textbf{0.698}^{*}$ &  $\textbf{0.700}^{*}$ &  $\textbf{0.712}^{*}$ &  0.083 &  0.121 &  0.146 \\
~ & Ours     &  $\textbf{0.502}^{*}$ &  $\textbf{0.566}^{*}$ &  $\textbf{0.591}^{*}$ &  $\textbf{0.214}^{*}$ &  $\textbf{0.262}^{*}$ &  0.173 &  $\textbf{0.334}^{*}$ &  $\textbf{0.384}^{*}$ &  0.417 &  $\textbf{0.572}^{*}$ &  $\textbf{0.621}^{*}$ &  $\textbf{0.558}^{*}$ &  0.669 &  0.683 &  0.600 &  $\textbf{0.365}^{*}$ &  $\textbf{0.412}^{*}$ &  $\textbf{0.406}^{*}$ \\

\midrule
\multicolumn{2}{c|}{improvement} &  +0.122 &   +0.155 & +0.134 &    +0.065 &    +0.080 &    +0.001 &    +0.081 &    0.097 & -0.071 &    +0.121 &    +0.128 &    +0.041 &    +0.049 &    +0.059 &   -0.007 & +0.251 &    +0.229 &    +0.135 \\
\bottomrule[1.2pt]
\end{tabular}
}
\label{tab:basic_experiment}
\end{table*}

\subsection{Theoretical Guarantees}
In order to assist users in selecting appropriate thresholds for converting mislabel scores into binary mislabel predictions, we propose an algorithm that is accompanied by theoretical guarantees on false positive and false negative rates, derived from the conformal prediction framework~\cite{vovk2005algorithmic,balasubramanian2014conformal}. That is, if we choose the threshold based on the following propositions, the probability of a false positive (i.e. mistakenly classifying a correctly labelled sample as mislabelled) and a false negative (i.e. mistakenly classifying a mislabelled sample as correctly labelled) can be bounded by the user-defined confidence level $\alpha$.


Specifically, given a dataset $\{(\mathbf{x}^{(i)},y^{(i)}) \}_{i=1}^{N}$ with mislabelling rate $p$ (i.e. the total number of mislabeled samples is $Np$), we compute every sample's corresponding mislabel scores $\{s^{(i)} \}_{i=1}^{N}$ and sort them in non-decreasing order and denote them as $(s_{(1)}, \dots, s_{(N)})$. We can have following theoretical guarantees:  
\begin{proposition}[False Positive Guarantee]
For any given confidence level $\alpha \in (\frac{1}{N+1}, 1)$, define the threshold as $\lambda_\alpha := s_{(B_\alpha)}$, \text{ where } $B_\alpha = \left\lceil(N(1-p)+1)(1- \alpha) + Np \right\rceil$, with probability at least $1-\alpha$ over the random choice of a correctly labelled sample $(\Tilde{\mathbf{x}}, \Tilde{y})$, we have:
\begin{align*}
\tilde{s} \le \lambda_\alpha,
\end{align*}
\end{proposition}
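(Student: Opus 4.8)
The plan is to prove this by a direct counting argument over order statistics, in the spirit of split conformal prediction, where the only source of randomness is the uniform draw of the correctly labelled test sample. Write $n_0 := N(1-p)$ for the number of correctly labelled samples and $n_1 := Np$ for the number of mislabelled ones, so $N = n_0 + n_1$, and let $\mathcal{I}_0$ be the index set of the correctly labelled samples, $|\mathcal{I}_0| = n_0$. Since $(\tilde{\mathbf{x}}, \tilde{y})$ is chosen uniformly among the correctly labelled samples, the false-positive probability is exactly the fraction of correctly labelled samples whose score exceeds the threshold:
\[
\mathbb{P}(\tilde{s} > \lambda_\alpha) = \frac{\#\{i \in \mathcal{I}_0 : s^{(i)} > \lambda_\alpha\}}{n_0}.
\]
The goal is therefore to upper bound the numerator.

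The key step is to bound the number of correctly labelled samples above the threshold by the total number of samples above it, which is controlled by the rank $B_\alpha$. Because $\lambda_\alpha = s_{(B_\alpha)}$ is the $B_\alpha$-th smallest of all $N$ scores, the order statistics $s_{(1)} \le \cdots \le s_{(B_\alpha)}$ are all $\le \lambda_\alpha$, so at most $N - B_\alpha$ of the $N$ scores can be strictly greater than $\lambda_\alpha$. A fortiori, at most $N - B_\alpha$ correctly labelled samples exceed the threshold (ties only reduce this count, since the event uses strict inequality), giving
\[
\mathbb{P}(\tilde{s} > \lambda_\alpha) \le \frac{N - B_\alpha}{n_0}.
\]

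It then remains to substitute the definition of $B_\alpha$ and simplify. Using $B_\alpha = \lceil (n_0+1)(1-\alpha) + n_1 \rceil \ge (n_0+1)(1-\alpha) + n_1$ together with $N - n_1 = n_0$, I obtain $N - B_\alpha \le n_0 - (n_0+1)(1-\alpha)$, whence $\frac{N-B_\alpha}{n_0} \le 1 - \frac{(n_0+1)(1-\alpha)}{n_0}$. A one-line rearrangement reduces the desired bound $\frac{N-B_\alpha}{n_0} \le \alpha$ to $n_0(1-\alpha) \le (n_0+1)(1-\alpha)$, i.e. $n_0 \le n_0+1$, which holds trivially after cancelling the positive factor $1-\alpha$ (using $\alpha < 1$). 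This closes the argument.

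The conceptual crux, and the thing to get right, is that the threshold is built from the order statistics of the \emph{full} dataset, which mixes correctly labelled and mislabelled samples, yet we must control only the correctly labelled ones. The resolution is worst-case reasoning: the additive term $+Np$ in $B_\alpha$ exactly budgets for the adversarial situation in which all $Np$ mislabelled samples have low scores, so that the $N - B_\alpha$ top positions are occupied entirely by correctly labelled samples; this is precisely why bounding the correctly labelled count by the total count above the threshold loses nothing. A secondary technical point to verify is well-definedness of $s_{(B_\alpha)}$, namely $1 \le B_\alpha \le N$: the lower restriction $\alpha > \frac{1}{N+1}$ prevents $B_\alpha$ from collapsing to a vacuous threshold, and one should double-check that the ceiling does not push $B_\alpha$ past $N$, which (accounting for the $+Np$ term and $N - n_1 = n_0$) requires $\alpha$ to be bounded below essentially by $\frac{1}{n_0+1}$.
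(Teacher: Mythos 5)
Your argument is correct, and the core combinatorial step is the same one the paper uses: at most $N-B_\alpha$ of the $N$ scores strictly exceed $s_{(B_\alpha)}$, and the additive $Np$ in $B_\alpha$ budgets for the worst case in which none of those top positions are occupied by mislabelled samples — in the paper this appears as the inequality $s_{(B_\alpha)} \ge s^{\mathcal{U}}_{(B_\alpha - Np)}$, where $s^{\mathcal{U}}_{(\cdot)}$ denotes the order statistics of the correctly labelled subsample. Where you genuinely diverge is in what the probability is taken over. You read ``random choice of a correctly labelled sample'' as a uniform draw from the $N(1-p)$ correctly labelled samples already in the scored dataset, which turns the claim into a deterministic counting identity, $\mathbb{P}(\tilde s > \lambda_\alpha) = \#\{i \in \mathcal{I}_0 : s^{(i)} > \lambda_\alpha\}/n_0 \le (N-B_\alpha)/n_0 \le \alpha$, requiring no distributional assumption whatsoever. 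The paper instead proves the statement for a \emph{newly drawn} sample from the distribution of correctly labelled data (this is explicit in its appendix, where the proposition is derived as a corollary of a general ``conformal prediction for mixtures'' theorem), which is the version one needs if the threshold is to be applied to held-out data; there the denominator becomes $n_0+1$ via the exchangeability bound $\mathbb{P}(\tilde s > s^{\mathcal{U}}_{(k)}) \le \frac{n_0+1-k}{n_0+1}$. Your in-sample bound is strictly tighter but does not by itself cover a fresh sample; adding exchangeability of the new point with the $n_0$ correctly labelled calibration points recovers exactly the paper's proof, and the same $B_\alpha$ works in both cases precisely because of the $+1$. Finally, your well-definedness remark is a genuine catch: $B_\alpha \le N$ requires $\alpha \ge \frac{1}{N(1-p)+1}$, which is strictly stronger than the stated hypothesis $\alpha > \frac{1}{N+1}$, so for $\alpha$ in between one must either tighten the hypothesis or adopt the convention $s_{(B)} = +\infty$ for $B > N$ (rendering the guarantee vacuous but true).
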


\begin{proposition}[False Negative Guarantee]
Define the modified score function $s'\! :=\! (1-s)\cdot \mathds{1}_{\{ s > 0.5 \}}$. For any given confidence level $\alpha \in (\frac{1}{N+1}, 1)$, define the threshold as $\lambda_\alpha\! :=\! s'_{(B_\alpha)}$, \text{ where } $B_\alpha\! =
\!\left\lceil(Np+1)(1- \alpha) + N(1-p) \right\rceil$, with probability at least $1-\alpha$ over the random choice of a mislabelled sample $(\Tilde{\mathbf{x}}, \Tilde{y})$ and with $\Tilde{s}'$ as its modified score, we have:
\begin{align*}
\Tilde{s}' = (1-\tilde{s})\cdot \mathds{1}_{\{ \tilde{s} > 0.5 \}} \le \lambda_\alpha,
\end{align*}
\end{proposition}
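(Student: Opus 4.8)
The plan is to prove both statements by the standard coverage argument of split conformal prediction, treating one of the two label groups as a calibration set and the held-out sample as an exchangeable test point. I will describe the False Negative Guarantee in detail, since the False Positive Guarantee then follows by the same argument with the two groups swapped. The key preliminary observation is that the detector flags a sample as mislabelled when its score $s$ is large, so a false negative is a true mislabel that receives a small $s$. The transform $s' = (1-s)\cdot\mathds{1}_{\{s>0.5\}}$ is exactly what turns ``$s$ should be large'' into ``$s'$ should be small'': it is strictly decreasing in $s$ on the detection region $\{s>0.5\}$ and collapses the miss region $\{s\le 0.5\}$ onto the minimal value $0$. Hence an upper bound on $\tilde{s}'$ is a one-sided coverage statement, which is precisely what a conformal order statistic delivers.

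Concretely, I would take the $K := Np$ mislabelled samples as the calibration set. Writing $g(s)=(1-s)\cdot\mathds{1}_{\{s>0.5\}}$, the pointwise map $g$ is fixed and independent of ordering, so the transformed scores $\{s'^{(i)}\}$ of the mislabelled samples together with the transformed score $\tilde{s}'$ of a freshly drawn mislabelled test point are exchangeable whenever the underlying mislabelled samples are. Let $s'^m_{(1)}\le\cdots\le s'^m_{(K)}$ be the sorted calibration scores and set $r := \lceil(K+1)(1-\alpha)\rceil$. The classical conformal inequality gives
\[
\mathbb{P}\!\left(\tilde{s}' \le s'^m_{(r)}\right) \;\ge\; \frac{r}{K+1} \;\ge\; 1-\alpha,
\]
where the first inequality is the exchangeability bound (up to the usual tie-breaking or continuity caveat) and the second uses $r \ge (K+1)(1-\alpha)$; for $\alpha$ in the admissible range the index $r$ stays in $\{1,\dots,K\}$ so the order statistic exists.

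The remaining and most instructive step connects the calibration quantile $s'^m_{(r)}$ to the actual threshold $\lambda_\alpha = s'_{(B_\alpha)}$, which is an order statistic over all $N$ transformed scores rather than over the $K$ calibration ones. Here I would argue purely by ranking: the rank of $s'^m_{(r)}$ within the full sorted list lies between $r$ and $r+N(1-p)$, because the only additional scores that can fall at or below it are the $N(1-p)$ correctly-labelled ones. Since $B_\alpha = \lceil(Np+1)(1-\alpha)+N(1-p)\rceil = r + N(1-p)$ (using that $N(1-p)$ is an integer), the full-list rank of $s'^m_{(r)}$ never exceeds $B_\alpha$, and monotonicity of order statistics yields $s'^m_{(r)} \le s'_{(B_\alpha)} = \lambda_\alpha$. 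Chaining with the coverage inequality gives $\mathbb{P}(\tilde{s}' \le \lambda_\alpha)\ge 1-\alpha$, as claimed. The point I expect to be the main obstacle to state cleanly is precisely this shift by $N(1-p)$: it is the worst-case correction that lets the guarantee hold \emph{without} assuming any separability between mislabelled and correctly-labelled scores, and it must be paired with the bookkeeping that $B_\alpha \le N$ (so the threshold is well-defined, which the stated condition $\alpha\in(\tfrac{1}{N+1},1)$ is meant to ensure) and with careful treatment of ties in the exchangeability step. The False Positive Guarantee is then immediate by symmetry: take the $N(1-p)$ correctly-labelled samples as the calibration set, work with the raw score $s$ (a large $s$ being the false-positive event for a correct sample), and shift the calibration rank by $Np$ in the same ranking argument.
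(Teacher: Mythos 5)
Your proposal is correct and follows essentially the same route as the paper: the paper packages the argument as a general ``Conformal Prediction for Mixtures'' theorem whose proof consists of exactly your two ingredients --- the exchangeability coverage bound over the calibration subgroup of mislabelled samples, and the rank-shift inequality $s'^{\mathcal{U}}_{(B_\alpha - N_V)} \le s'_{(B_\alpha)}$ that accounts for the $N(1-p)$ correctly-labelled scores --- and then instantiates it with the modified score $s'$. Your explicit bookkeeping on ties, integrality, and the requirement $B_\alpha \le N$ is a welcome addition that the paper glosses over.
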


The detailed proof is in Appendix \ref{app:mislabel-proof}. 

\paragraph{Remark} 
These two propositions show that by sorting the mislabel scores in non-decreasing order and selecting a threshold based on the user-specified confidence level $\alpha$, we can ensure that the probability of falsely classifying a correctly labelled sample as mislabelled or a mislabelled sample as correctly labelled is bounded by $\alpha$. This allows users to have confidence in the algorithm's ability to effectively detect mislabelled samples for their specific use case.





\section{Experiments}
\label{sec:exp}
In this section, we conduct experiments to answer the following research questions:
\begin{compactitem}
\item \textbf{RQ1 (Effectiveness):} Does our method outperform other state-of-the-art
methods in detecting mislabels across multiple settings, datasets, and base classifiers?
\item \textbf{RQ2 (Ablation and Robustness):} How do different components and hyperparameters of our method contribute to the performance?
\item \textbf{RQ3 (Case Studies):} Does our method detect real, previously unknown mislabels in existing popular graph learning datasets such as \texttt{PubMed}, \texttt{Cora}, \texttt{CiteSeer}, and \texttt{OGB-arxiv}? 
\end{compactitem}

Due to space limitation, we refer experiments about hyperparameters and additional discussion on case studies to Appendix \ref{app:hyperparameters} and \ref{app:case_study}.

\subsection{RQ1. Effectiveness}
\label{sec:accuracy}

\paragraph{Experimental Setup}
There is no ground truth dataset for the mislabel detection task. In order to derive datasets with ground truth labels indicating whether a sample is mislabelled, we follow the practice of related works (e.g., DYB, AUM, CL) and randomly introduce mislabels at $\epsilon$-fraction of the nodes.
Similarly to INCV~\cite{chen2019understanding}, we introduce two mislabel types in our experiment, \emph{symmetric} and \emph{asymmetric}. In the symmetric setting,  the probability of shift from one class to another is equal. In the asymmetric setting, we change class $i$ to class $(i+1) \mod c$. 

We use three mislabel rates, $\epsilon=0.1, 0.05, 0.025$, for realistic concern. Mislabel rates are typically very low in real-world datasets. In \citet{northcutt2021pervasive}, the averaged test set mislabel rate is about 3.4\%, and the highest mislabel rate is 10.12\% on ImageNet. Though many other methods use larger mislabel rates, we choose 2.5\%, 5\% and 10\% to mimic real-world settings. 
We refer to Appendix \ref{app:exp} for more details about experimental setup.


\paragraph{Methods in Comparison} We compare \method with four other methods: AUM~\cite{pleiss2020identifying}, DYB~\cite{arazo2019unsupervised}, CL~\cite{northcutt2021confident} and a simple baseline that treats samples whose argmax predictions differ from given labels as mislabels. To the best of our knowledge, AUM, DYB and CL are the current state-of-the-art approaches for mislabel detection on non-graph data. We apply them to graphs by using a graph-based base classifier. Moreover, we only compare with the label noise modeling part of DYB as our goal is to detect mislabels.

\paragraph{Datasets and Evaluation Metrics} We use 6 datasets, namely, \texttt{Cora}, \texttt{CiteSeer} and \texttt{PubMed}~\cite{yang2016revisiting}, \texttt{Computers} and \texttt{Photo}~\cite{shchur2018pitfalls}, \texttt{OGB-arxiv}~\cite{hu2020ogb} and 3 commonly used metrics: F1,
Matthews Correlation Coefficient (MCC)
and P@T\footnote{\url{https://en.wikipedia.org/wiki/Evaluation_measures_(information_retrieval)\#Precision_at_k}}
, where `T' represents the number of artificially mislabelled nodes, which varies in different experimental settings.

\begin{table}[ht]
\centering
\caption{Performance on two different base GNN classifiers: GIN and GraphUNet (referred as `GUN'). 
We refer to Table~\ref{tab:basic_experiment} for the full name of abbreviations. 
Experiments are done under the 10\% symmetric setting.}
\scalebox{.59}{
\begin{tabular}{l|r|rrr|rrr|rrr}
\toprule[1.2pt]
~ & \multirow{2}*{Method} & \multicolumn{3}{c|}{\texttt{Cora}} & \multicolumn{3}{c|}{\texttt{Computers}} & \multicolumn{3}{c}{\texttt{OGB-arxiv}} \\
~ & ~ &       F1 &     MCC &   P@T &      F1 &     MCC &   P@T &      F1 &     MCC &   P@T \\
\midrule
\midrule
\multirow{5}*{GIN} & baseline &  0.161 &  0.083 &  0.144 &  0.176 &  0.194 &  0.108 &  0.045 &  0.140 &  0.023 \\
~ & DYB      &  0.199 &  0.104 &  0.142 &  0.441 &  0.438 &  0.603 &  0.374 &  0.364 &  0.548 \\
~ & AUM      &  0.209 &  0.147 &  0.449 &  0.408 &  0.409 &  0.644 &  0.267 &  0.237 &  0.441 \\
~ & CL       &  0.339 &  0.258 &  0.326 &  0.411 &  0.344 &  0.531 &  0.234 &  0.134 &  0.353 \\
~ & Ours     &  \textbf{0.784} &  \textbf{0.762} &  \textbf{0.756} &  \textbf{0.855} &  \textbf{0.840} &  \textbf{0.851} &  \textbf{0.687} &  \textbf{0.669} &  \textbf{0.693} \\

\midrule
\multirow{5}*{GUN} & baseline &  0.300 &  0.355 &  0.186 &  0.169 &  0.277 &  0.093 &  0.051 &  0.141 &  0.026 \\
~ & DYB      &  0.269 &  0.254 &  0.227 &  0.516 &  0.513 &  \textbf{0.831} &  0.232 &  0.185 &  0.404 \\
~ & AUM      &  0.308 &  0.307 &  0.732 &  0.555 &  0.555 &  0.684 &  0.331 &  0.318 &  0.365 \\
~ & CL       &  0.520 &  0.471 &  0.546 &  0.555 &  0.514 &  0.607 &  0.258 &  0.230 &  0.358 \\
~ & Ours     &  \textbf{0.804} &  \textbf{0.788} &  \textbf{0.845} &  \textbf{0.846} &  \textbf{0.832} &  \textbf{0.831} &  \textbf{0.691} &  \textbf{0.668} &  \textbf{0.645} \\
\bottomrule[1.2pt]
\end{tabular}
}
\label{tab:vary_GNN}
\end{table}

\paragraph{Findings} Our \method outperforms other methods under almost all metrics across 6 datasets and 6 noise settings, especially in F1 and MCC. We obtain an average margin of 0.14 in F1 and 0.16 in MCC compared to the second best method as shown in Table~\ref{tab:basic_experiment}. Table~\ref{tab:vary_GNN} shows that our \method generalizes well to different base classifiers and consistently outperforms other methods.

    
    

\subsection{RQ2. Ablation and Robustness}
\label{sec:ablation}

\paragraph{Ablation} To show the effectiveness of each component, we compare \method with three ablated variants:
\begin{compactitem}
    \item \emph{L only}: only the agreement between a sample's label and the labels of its neighbors is used; 
    
    \item \emph{P only}: only the agreement between a sample's label and the softmax predictions of its neighbors is used;

    \item \emph{No CL}: instead of following the mislabel transition matrix, we randomly mislabel sampled nodes to some other classes to generate synthetic mislabels.
\end{compactitem}

\begin{table}[htbp]
\centering
\caption{Ablation study. 
We refer to Section \ref{sec:ablation} for the description of different variants: `No CL', `L only' and `P only'. 
Experiments are conducted using GCN with 10\% noise.
}
\scalebox{0.6}{
\begin{tabular}{l|r|rrr|rrr|rrr}
\toprule[1.2pt]
~ & \multirow{2}*{Method} & \multicolumn{3}{c|}{\texttt{Cora}} & \multicolumn{3}{c|}{\texttt{Computers}} & \multicolumn{3}{c}{\texttt{OGB-arxiv}} \\
~ & ~ &       F1 &     MCC &   P@T &      F1 &     MCC &   P@T &      F1 &     MCC &   P@T \\
\midrule
\midrule
\multirow{4}*{sym} &  \emph{L only} &  0.798 &  0.780 &  0.801 &  0.840 &  0.826 &  0.845 &  0.750 &  0.733 &  0.808 \\
~ & \emph{P only} &  0.676 &  0.654 &  0.680 &  0.844 &  0.829 &  0.838 &  0.736 &  0.718 &  0.805 \\
~ & \emph{No CL}   &  \textbf{0.802} &  \textbf{0.782} &  \textbf{0.808} &  \textbf{0.865} &  \textbf{0.851} &  \textbf{0.859} &  \textbf{0.819} &  \textbf{0.799} &  \textbf{0.813} \\
~ & Ours          &  0.790 &  0.773 &  0.803 &  0.844 &  0.830 &  0.840 &  0.760 &  0.744 &  0.809 \\

\midrule
\multirow{4}*{asym} &  \emph{L only} &  0.658 &  0.624 &  0.642 &  \textbf{0.781} &  \textbf{0.761} &  0.685 &  0.460 &  0.397 &  0.438 \\
~ & \emph{P only} &  0.544 &  0.508 &  0.554 &  0.759 &  0.740 &  0.661 &  \textbf{0.481} &  \textbf{0.420} &  \textbf{0.455} \\
~ & \emph{No CL}   &  0.635 &  0.596 &  0.621 &  0.474 &  0.449 &  0.382 &  0.295 &  0.288 &  0.203 \\
~ & Ours          &  \textbf{0.669} &  \textbf{0.633} &  \textbf{0.647} &  0.778 &  0.757 &  \textbf{0.692} &  0.477 &  0.415 &  0.452 \\
\bottomrule[1.2pt]
\end{tabular}}
\label{tab:ablation}
\end{table}

Table~\ref{tab:ablation} shows that our \method consistently outperform \emph{No CL} version with a large margin in asymmetric settings, and is slightly worse in symmetric settings. This is unsurprising and reasonable considering that asymmetric mislabelling noise follows a class-wise pattern that can be dealt with more effectively using the mislabel transition matrix, while symmetric setting follows a class-irrelevant noise pattern. Overall, this suggests that our default framework with mislabel transition matrix is more robust to different environments, e.g. symmetric and asymmetric. 
Moreover, when compared to \emph{L only} and \emph{P only}, our default \method always outperforms the inferior method, and in most cases outperforms both, indicating that our method is adaptive and can generalize better.




\subsection{RQ3. Case Studies}

\paragraph{Overview} In this subsection, we aim to show that our method is able to detect and correct real, previously unknown mislabelled samples in a variety of popular real-world graph datasets: \texttt{PubMed}, \texttt{CiteSeer}, \texttt{Cora}, and \texttt{OGB-arxiv}. Then, to understand the implications of these findings, we conduct a more detailed analysis on the \texttt{PubMed} dataset, where some auxiliary information allows us to loosely estimate the number and impact of mislabels.

\paragraph{Experimental Procedure} We follow the same experimental settings in Section \ref{sec:accuracy}, except applied to the original datasets without synthetic mislabels.
For each dataset, we extract the top $30$ samples which our algorithm assigns the highest mislabel scores, i.e., that it regards as most likely to be mislabelled. We then manually inspect each of these top $30$ samples from each dataset by accessing the original text of each paper, to determine whether our algorithm is correct. In particular, we manually categorize each sample into 5 possible categories: `clear mislabel', `likely mislabel', `ambiguous', `likely non-mislabel', and `clear non-mislabel'. 

\paragraph{Findings} The results are shown in Figure \ref{fig:casestudy_summary}. We have a number of key findings:
\begin{compactitem}
\item A substantial fraction of these samples are mislabelled: averaged over the 4 datasets, $39\%$ of the samples are likely or clear mislabels. We do not expect this fraction to be close to $100\%$, since the fraction of mislabels overall is expected to be low~\cite{northcutt2021pervasive}; moreover, our base classifiers are not close to $100\%$ accurate. Still, the results indicate that our method can greatly improve the efficiency of manual corrections, by identifying likely mislabels for humans to check. 

\item There is a fairly large variation between datasets, with the largest fraction of mislabels in \texttt{PubMed}. This may be due to several factors: differences in the fraction of mislabels in each dataset, different accuracy of the base classifiers, and different levels of label ambiguity. 

\item In practice, there can also be value in identifying the `ambiguous' or even `likely non-mislabel' samples, as such samples carry a significant amount of \emph{label noise}. Many such samples span multiple categories
, where algorithms for handling label noise could be employed.
\end{compactitem}

\begin{figure}[h]
    \centering
    \includegraphics[width=\linewidth]{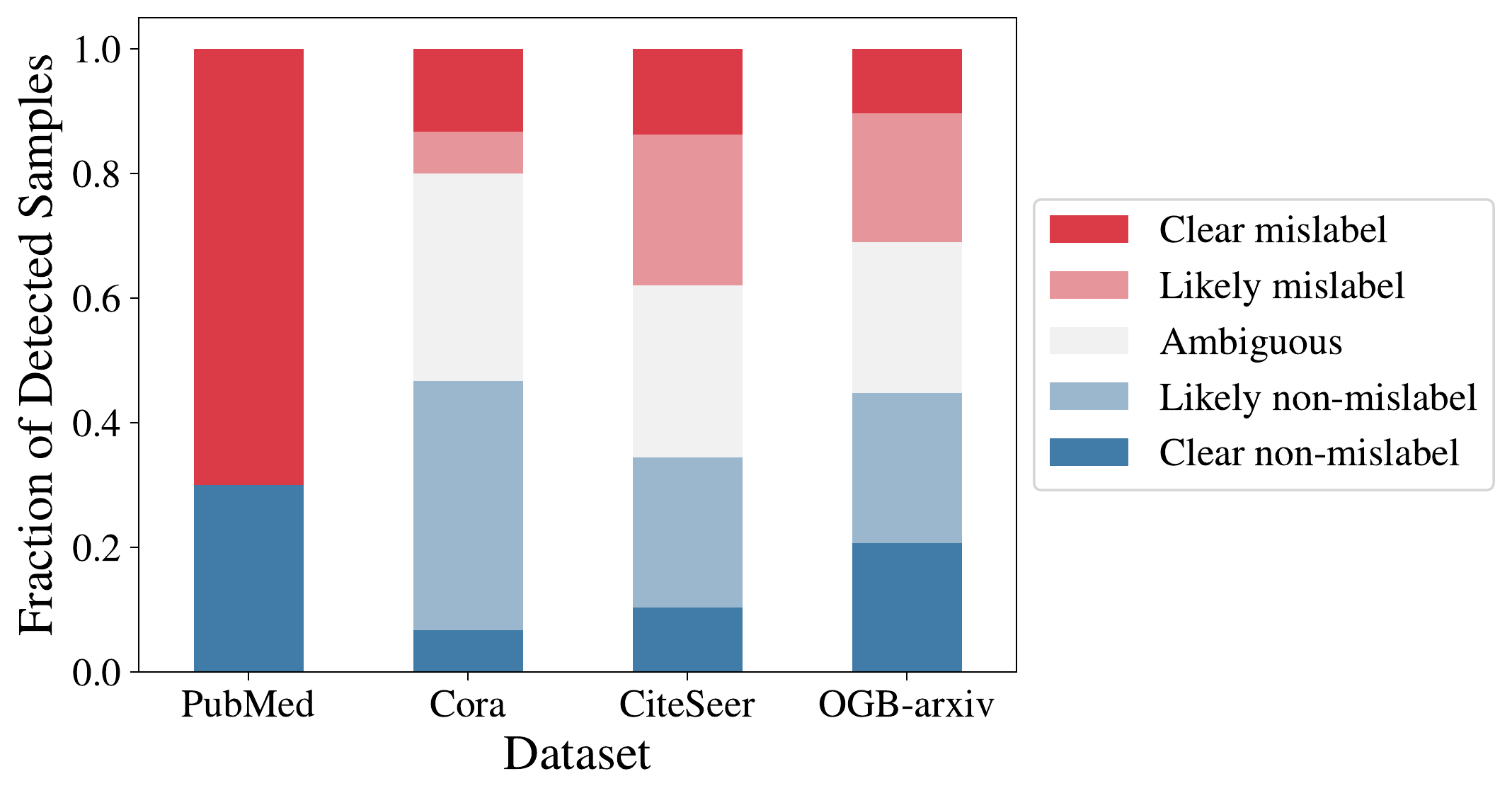}
    \caption{
    Fraction of each mislabel type among the top 30 detected samples in each dataset: 
    namely, \texttt{PubMed}, \texttt{Cora}, \texttt{CiteSeer}, and \texttt{OGB-arxiv}. }
    \label{fig:casestudy_summary}
\end{figure}


\subsubsection{Mislabel Rate Estimation on \texttt{PubMed}}
Can we loosely estimate the total amount of label noise in \texttt{PubMed}? What are the implications of this label noise? How do they affect the evaluation performance? 
To answer these questions, we focus on \texttt{PubMed} due to the presence of some helpful auxiliary information.

By querying PubMed API, we find that 71 papers of \texttt{PubMed} are assigned 0 labels (i.e., MeSH terms), 1256 papers are assigned 2 labels, and 39 papers are assigned 3 labels. However, the \texttt{PubMed} dataset was created by simply selecting the first label in alphabetical order rather than the `most correct' class. Consequently, this injects a form of label noise or ambiguity to at least $6.91\%$ of the entire dataset. More details about the estimation can be found in Appendix \ref{app:case_study}. Note that this is only a lower bound for the true amount of label noise, as it excludes mislabels from annotations and other sources, such as those in Figure \ref{fig:mislabels}.

\paragraph{Implication of Label Noise} What are the implications of these $6.91\%$ label noise samples? To study the effect, we compare the accuracy of a simple GCN+mixup baseline before and after removing the above-mentioned noisy-labelled samples in the test set. The accuracy improves from $86.71\%$ to $89.11\%$, showing that label noise has significant effects on performance evaluation. This validates the importance of data quality (e.g., label noise) for performance and evaluation, and suggests that correcting mislabels has scope for significant value. 

In addition, recognizing the importance of correcting this label noise, we publicly release two new variants of \texttt{PubMed} dataset: 1) \texttt{PubMedCleaned}, which removes\footnote{We would like to clarify that we do not actually remove detected mislabels from the graph by deleting their nodes, or recommend doing so. Instead, we recommend keeping the nodes themselves, but only removing their labels (or equivalently, removing their node index from the set of node indices constituting the training or test sets, without removing them from the graph). Most graph neural networks can be trained in a semi-supervised manner, so we can easily avoid computing the loss on these mislabelled nodes, while preserving the structure and feature information of these nodes, e.g. to avoid bridge nodes from being removed.} these noisy-labelled samples, and also corrects all detected mislabels; 2) \texttt{PubMedMulti}, which keeps multi-labelled samples but explicitly assigns them multiple labels, for users to develop algorithms which can handle the multi-labelling scenario.

\section{Conclusions}
\label{sec:conclusion}
Data quality is crucial to the success of AI systems. Our case studies, however, demonstrate that label noise is prevalent in popular graph datasets, and that performance on \textsc{PubMed} changes from 86.71\% to 89.11\% by simply removing some label noise, highlighting the importance of detecting and correcting mislabels. To address the issue, we propose \method that utilizes the \emph{neighborhood-dependence} pattern of graphs to detect mislabelled nodes in a graph.
Extensive experiments on 6 datasets across 6 noise settings verify the effectiveness and robustness of our method. The case studies further validate its practicality in real-world applications.

The current study utilizes the neighborhood-dependence property to detect mislabels and improve data quality. Besides that, it would be interesting to explore what other characteristics are associated with label and data quality. 
Going beyond the current task, we believe this framework is also promising for providing calibrated confidence and misclassification detection on graph data.

\section*{Acknowledgements}

This work was supported by NUS-NCS Joint Laboratory (A-0008542-00-00).

\nocite{langley00}

\bibliography{main}
\bibliographystyle{icml2023}

\clearpage
\appendix
\label{sec:app}



\section{Comparison to Learning with Noisy Labels Methods}
\label{app:cmp_learn_with_noise}

Our \method and some learning-with-noise methods all explore neighborhood similarity~\cite{zhu2021clusterability, zhu2021second}. But they focus on i.i.d. image data, while we focus on non-i.i.d. graph data, which differs from i.i.d settings, as our edge relations are 1) of primary importance; and 2) can involve more complex relations other than just similarity. Hence, we presents a flexible approach that learns the patterns of mislabels from data, primarily based on graph structure information.

Synthetic mislabel generation based on transition probabilities is applied in \citet{xia2019anchor, jiang2021information} to help training model with noisy data. But their focus is on how to estimate and predefine a noise transition matrix, while our noise transition matrix is learned and utilized to generate synthetic mislabels in a post-hoc way, which is more flexible and can easily adapt to different data.

\section{Learning Mislabel Transition Matrix}
\label{app:learn_transition}

To generate class-dependent label noise, we learn a \emph{mislabel transition matrix} to capture the probability of mislabelling one class to some other class.


Following the definition in \citet{northcutt2021confident}, we first calculate confident joint $\mathbf{C}_{\tilde{y},y^*}$ based on the model predictions on validation set, which estimates the number of samples with noisy label $\tilde{y}$ and actual true label $y^*$:

\begin{align}
    \mathbf{C}_{\tilde{y},y^*} [i][j] \coloneqq 
    \left| 
    \mathbf{\hat{X}}_{\tilde{y}=i,y^*=j} 
    \right|,
\label{eq:conf-joint}
\end{align}

where $\mathbf{\hat{X}}_{\tilde{y}=i,y^*=j}$ is the estimated set of samples with observed noisy label $i$ and unknown true label $j$. Its formal definition is:

\begin{align}
\begin{split}
    \mathbf{\hat{X}}_{\tilde{y}=i,y^*=j} \coloneqq
    & \{ 
    x \in \mathbf{X}_{\tilde{y}=i} :
    \hat{p} \left( \tilde{y}=j ; x, \theta \right) \ge t_j, \\
    & j = \mathop{\arg\max}\limits_{\mathclap{\scriptscriptstyle l \in [c]: \hat{p} \left( \tilde{y}=j ; x, \theta \right) \ge t_l}}
    \hat{p} \left( \tilde{y}=j ; x, \theta \right) 
    \},
\end{split}
\end{align}

where $\hat{p} \left( \tilde{y}=i ; x, \theta \right)$ is the predicted probability of label $\tilde{y}=i$ for sample $x$ and model parameters $\theta$, and $\mathbf{X}_{\tilde{y}=i}$ is the set of samples with observed noisy label $i$. The threshold $t_j$ is the average self-confidence for class $j$:

\begin{align}
\label{eq:threshold}
    t_j = \frac{1}{\left| \mathbf{X}_{\tilde{y}=j} \right|}
    \mathop{\sum}\limits_{x \in \mathbf{X}_{\tilde{y}=j}} 
    \hat{p} \left( \tilde{y}=j ; x, \theta \right).
\end{align}

As shown in the above formulas, samples in $\mathbf{\hat{X}}_{\tilde{y}=i,y^*=j}$ should satisfy three conditions. First, their observed noisy label $\tilde{y}$ should be $i$. Second, their predicted probability of belonging to class $j$ should be higher than class $j$'s average self-confidence. Third, among all the classes whose predicted probability is larger than their corresponding self-confidence, class $j$'s predicted probability is the highest.

Now we can estimate the joint distribution of noisy label and true label based on the confident joint:

\begin{align}
    \mathbf{\hat{Q}}_{\tilde{y}=i,y^*=j} = 
    \frac
    {\frac{\mathbf{C}_{\tilde{y}=i,y^*=j}}
    {\sum_{j \in [c]} \mathbf{C}_{\tilde{y}=i,y^*=j}}
    \cdot 
    \left| \mathbf{X}_{\tilde{y}=i} \right|}
    {\sum\limits_{i \in [c], j \in [c]} 
    \left(
    \frac{\mathbf{C}_{\tilde{y}=i,y^*=j}}
    {\sum_{j^\prime \in [c]} \mathbf{C}_{\tilde{y}=i,y^*=j^\prime}}
    \cdot 
    \left| \mathbf{X}_{\tilde{y}=i} \right|
    \right)}.
\end{align}

Finally, the mislabel transition matrix can be calculated following the conditional probability formula $\mathbf{\hat{Q}}_{\tilde{y}=i|y^*=j} \coloneqq \mathbf{\hat{Q}}_{\tilde{y}=i,y^*=j} / \mathbf{\hat{Q}}_{y^*=j}$.

\section{Discussion on Sampling Mislabels}
\label{app:discuss_samp_mislabel}

We assume that $\mathcal{V}_{\text{train}}$, $\mathcal{V}_{\text{val}}$, $\mathcal{V}_{\text{test}}$ are all noisy, which means mislabels exist in the raw dataset. 
But according to \citet{northcutt2021confident}, raw mislabels only account for a minor proportion of a dataset. 
Some of the raw mislabels may be chosen as synthesized mislabels. 
There is a slight chance that the labels of these chosen raw mislabels will be flipped to the correct ones. 
Nevertheless, the raw mislabels whose labels remain wrong after \emph{Synthetic Mislabel Dataset Generation} are still a very small portion, which serve as acceptable noises contributing to the robustness of our mislabel detector.

\section{Design of Neighborhood Agreement Features}
\label{app:neighborhood_agreement}

Using both the corrupted and original label matrices ($\mathbf{Y}$ and $\mathbf{Y}_c$) is an important design choice to ensure that \emph{synthetic mislabels only corrupt their own features, not the features of other nodes}. 

Specifically, when we flip the label of a node $v \in \mathcal{V}_{\text{synth}}$, this affects $\mathbf{Y}_c$ but not $\mathbf{Y}$ or $\mathbf{P}$; hence, the features at node $v$ are affected, but those at other nodes are not. This is important as when training the mislabel detector, the fraction of synthetic mislabels is high (due to our use of a balanced training set), so we do not want to allow the mislabels at corrupted nodes to mislead the training of the mislabel detector.


   
   

\section{Time Complexity}
\label{app:complexity}

In this section, we estimate the time complexity of our \method. Given that there are $m$ edges and $s$ epochs, it takes $O(n+c^2)$ to estimate the mislabel transition matrix, $O(mKc)$ to generate the neighborhood agreement features, and $O(nKs)$ to train the mislabel detector. Test time complexity is just $O(K)$ per test sample. Overall, training time is linear and testing time is constant per test sample, which is highly efficient.

\section{Theoretical Guarantees}
\label{app:mislabel-proof}

\paragraph{Overview}
In this section, we show how we can set mislabel score thresholds to obtain guarantees on the false positive and false negative probabilities, using the approach of conformal prediction. Such guarantees can be valuable in many practical settings, where are interested in having guarantees on the reliability of our model's decisions.

Conformal prediction \cite{vovk2005algorithmic,balasubramanian2014conformal} is a simple, distribution-free approach for obtaining confidence guarantees. Importantly, conformal prediction does not require the assumption that the mislabel scores are i.i.d., but only that they are \emph{exchangeable}; i.e., their distribution does not change under any permutation of the sample indices. This is especially suitable for the graph setting, where the samples are in fact not i.i.d.: e.g., samples which are nearby along the graph tend to be correlated. However, the data is still exchangeable, since permutations to the indices do not affect the data distribution.

\paragraph{Comparison to prior work}
Conformal prediction \cite{vovk2005algorithmic} is typically applied to the standard setting where all the samples are exchangeable. Most closely related to our approach is \citet{xiong2022birds}, which extended this to the mislabel detection setting, where a small number of samples can be mislabelled. However, the approach in \citet{xiong2022birds} only allows for practical false positive guarantees, so they do not prove false negative guarantees, while our approach allows for both false positive and negative guarantees due to our use of modified scores, which tightens the bounds by exploiting the accuracy of the mislabel classifier.

Let $\{\cdots\}$ denote a set and $(\cdots)$ denote an ordered tuple: e.g., sorting a set $\{3, 1, 2\}$ yields an ordered tuple $(1,2,3)$.

Before focusing on the mislabel detection setting, we first consider a more general setting where we are given a dataset $\{(\mathbf{x}^{(i)},y^{(i)}) \}_{i=1}^{N}$ with any set of scores $\{s^{(i)} \}_{i=1}^{N}$; we further assume that the dataset comes from a mixture of two distributions: specifically, $N_U$ of the the samples come from some distribution $\mathcal{U}$, i.e., 
$(\mathbf{x},y) \sim \mathcal{U}$, while the remaining $N_V = N - N_U$ samples come from the distribution 
$\mathcal{V}$, i.e., $(\mathbf{x},y) \sim \mathcal{V}$. Let $(s_{(i)})_{i=1}^{N}$ denote the score of these samples in non-decreasing order. 

\begin{theorem}[Conformal Prediction for Mixtures]
    \label{thm:conformal}
    For any given confidence level $\alpha \in (\frac{1}{N+1}, 1)$,
     define the threshold as 
    \begin{align*}
    \lambda_\alpha := s_{(B_\alpha)}, \text{ where } B_\alpha = \left\lceil(N_U+1)(1- \alpha) + N_V \right\rceil.
    \end{align*}

    Now consider a newly drawn sample (e.g., from the test set): 
    \begin{align*}
    (\Tilde{\mathbf{x}}, \Tilde{y}) \sim \mathcal{U}.
    \end{align*}
    
    Then, with probability at least $1-\alpha$ over the random choice of $(\Tilde{\mathbf{x}}, \Tilde{y})$, we have:
    \begin{align*}
    \Tilde{s} \le \lambda_\alpha,
    \end{align*}
    
    where $\Tilde{s}$ is the score of $\Tilde{\mathbf{x}}$.
\end{theorem}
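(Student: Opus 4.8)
\textbf{Proof proposal for Theorem~\ref{thm:conformal}.}

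The plan is to reduce the mixture setting to a clean exchangeability argument about the rank of the new score $\tilde{s}$ among the scores drawn from the distribution $\mathcal{U}$. The key observation is that the $N_V$ samples from $\mathcal{V}$ are a nuisance: they can only push the order statistic $s_{(B_\alpha)}$ upward relative to where it would sit if we looked only at the $\mathcal{U}$-samples. Concretely, I would first isolate the $N_U$ scores coming from $\mathcal{U}$, call their sorted version $(u_{(1)}, \dots, u_{(N_U)})$, and argue that together with the fresh draw $\tilde{s} \sim \mathcal{U}$ these $N_U + 1$ scores are exchangeable. This is exactly where the distribution-free machinery applies: we never need independence, only that permuting the indices of these $N_U+1$ identically-$\mathcal{U}$-distributed samples leaves the joint law invariant.

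The second step is the standard conformal rank bound. By exchangeability of the $N_U+1$ scores $\{u_{(1)},\dots,u_{(N_U)},\tilde{s}\}$, the rank of $\tilde{s}$ among them is (up to ties) uniform on $\{1, \dots, N_U+1\}$, so for an integer $\ell$ we have
\begin{align*}
\Pr\!\left[\tilde{s} \le u_{(\ell)}\right] \ge \frac{\ell}{N_U+1}.
\end{align*}
Choosing $\ell = \lceil (N_U+1)(1-\alpha)\rceil$ makes the right-hand side at least $1-\alpha$, giving $\Pr[\tilde{s} \le u_{(\ell)}] \ge 1-\alpha$. The remaining task is purely combinatorial: I must translate the order statistic $u_{(\ell)}$ of the $\mathcal{U}$-subsample into an order statistic $s_{(B_\alpha)}$ of the full pooled sample. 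Since there are exactly $N_V$ extra scores interleaved among the pooled list, the $\ell$-th smallest $\mathcal{U}$-score is no larger than the $(\ell + N_V)$-th smallest pooled score; that is, $u_{(\ell)} \le s_{(\ell + N_V)}$. Setting $B_\alpha = \ell + N_V = \lceil (N_U+1)(1-\alpha)\rceil + N_V$, which matches the stated formula, we conclude $\tilde{s} \le u_{(\ell)} \le s_{(B_\alpha)} = \lambda_\alpha$ on the high-probability event, completing the proof.

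The step I expect to be the main obstacle is making the interleaving inequality $u_{(\ell)} \le s_{(\ell+N_V)}$ fully rigorous, especially in the presence of ties and because $B_\alpha$ is defined with a ceiling that could in principle exceed $N$. I would need to verify that $B_\alpha \le N$ holds under the hypothesis $\alpha > \frac{1}{N+1}$ (so that $\lambda_\alpha$ is a well-defined entry of the sorted list), and handle the tie-breaking carefully so that the rank-uniformity bound is an inequality in the correct direction. A secondary subtlety is that the pooled scores $s^{(i)}$ are \emph{not} exchangeable as a whole — only the $\mathcal{U}$-subsample plus the test point are — so I must be careful to apply exchangeability only to that subcollection and treat the $\mathcal{V}$-scores as fixed (or conditioned upon) when invoking the rank argument. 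Once these bookkeeping points are settled, the two propositions on false positive and false negative rates follow by instantiating $\mathcal{U}$ as the correctly-labelled distribution (with $N_U = N(1-p)$, $N_V = Np$) and $\mathcal{U}$ as the mislabelled distribution with the modified score $s'$ respectively.
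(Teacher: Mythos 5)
Your proposal is correct and follows essentially the same route as the paper's proof: the paper likewise bounds $\mathbb{P}(\tilde{s} > s_{(B_\alpha)})$ by $\mathbb{P}(\tilde{s} > s^{\mathcal{U}}_{(B_\alpha - N_V)})$ via exactly your interleaving observation (the $\ell$-th smallest $\mathcal{U}$-score is at most the $(\ell+N_V)$-th smallest pooled score), and then applies the standard exchangeability rank bound to the $N_U+1$ scores from $\mathcal{U}$. Your flagged concern about $B_\alpha$ possibly exceeding $N$ is legitimate (it requires $\alpha \ge \tfrac{1}{N_U+1}$ rather than the stated $\alpha > \tfrac{1}{N+1}$, or the convention $\lambda_\alpha = +\infty$ in that case), but this is an edge case the paper itself does not address.
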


\begin{proof}

Let $(s_{(i)}^{\mathcal{U}})_{i=1}^{N_U}$ denote the scores of the samples from $\mathcal{U}$ in non-decreasing order. The probability that the score of $\Tilde{\mathbf{x}}$ exceeds the threshold $\lambda_\alpha$ is:
\begin{align} 
\label{eq1}
    \mathbb{P}\left(\Tilde{s} > \lambda_\alpha \right) 
    &= \mathbb{P}\left(\Tilde{s} > s_{(B_\alpha)} \right) \\
    &\leq  \mathbb{P}\left(\Tilde{s} > s_{(B_\alpha-N_V)}^{\mathcal{U}} \right) \\ \label{eq:eq_conformal}
    & \leq \frac{N_U+1 - (B_\alpha-N_V)}{N_U+1} \\
    & = \frac{N+1 - B_\alpha}{N_U+1} \\
    &=  \frac{N+1- \left\lceil(N_U+1)(1- \alpha) + N_V \right\rceil }{N_U+1} \\
    &\leq \frac{N+1- (N_U+1)(1- \alpha) - N_V }{N_U+1} \\
    &= \frac{N_U+1- (N_U+1)(1- \alpha)}{N_U+1} \\
    & = \frac{(N_U+1)(\alpha)}{N_U+1} \\
    & = \alpha
\end{align}
Therefore, with probability at least $1-\alpha$, the score $\Tilde{s}$ lies below the threshold, i.e. 
\begin{align}
\label{eq2}
\mathbb{P}\left(\Tilde{s} \leq \lambda_\alpha \right) \geq 1-\alpha.
\end{align}

Note that the step \eqref{eq:eq_conformal} comes from the fact that $\tilde{\mathbf{x}}$ is exchangeable
with the other samples from $\mathcal{U}$~\citep{balasubramanian2014conformal}.
\end{proof}

Next, we show how Theorem \ref{thm:conformal} can be used to obtain both false positive and negative guarantees, for appropriately selected thresholds. 

Given a dataset $\{(\mathbf{x}^{(i)},y^{(i)}) \}_{i=1}^{N}$, and let the mislabel scores computed by our \textsc{GraphCleaner} algorithm be $\{s^{(i)} \}_{i=1}^{N}$. Let $p$ be the fraction of these samples which are mislabelled. 

\begin{proposition}[False Positive Guarantee]
Letting $\lambda_\alpha := s_{(B_\alpha)}$, \text{ where } $B_\alpha = \left\lceil(N(1-p)+1)(1- \alpha) + Np \right\rceil$, with probability at least $1-\alpha$ over the random choice of a new sample $(\Tilde{\mathbf{x}}, \Tilde{y}) \sim \mathcal{U}$, we have:
\begin{align*}
\tilde{s} \le \lambda_\alpha,
\end{align*}
\end{proposition}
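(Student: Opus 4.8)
The plan is to derive the False Positive Guarantee as a direct corollary of Theorem~\ref{thm:conformal}, by instantiating the general mixture framework with the specific two-population structure induced by mislabelling. The key observation is that the dataset $\{(\mathbf{x}^{(i)},y^{(i)})\}_{i=1}^N$ decomposes into a correctly-labelled population and a mislabelled population. I would identify $\mathcal{U}$ with the distribution of \emph{correctly labelled} samples and $\mathcal{V}$ with the distribution of \emph{mislabelled} samples. Since the fraction of mislabels is $p$, the two subpopulation sizes are $N_V = Np$ (mislabelled) and $N_U = N - N_V = N(1-p)$ (correctly labelled).

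With this identification, I would substitute $N_U = N(1-p)$ and $N_V = Np$ into the threshold index formula from Theorem~\ref{thm:conformal}, namely $B_\alpha = \left\lceil (N_U+1)(1-\alpha) + N_V \right\rceil$, which immediately yields $B_\alpha = \left\lceil (N(1-p)+1)(1-\alpha) + Np \right\rceil$, matching the proposition's statement exactly. The threshold is then $\lambda_\alpha := s_{(B_\alpha)}$, where the scores are sorted in non-decreasing order as set up before the proposition. A newly drawn correctly-labelled sample $(\Tilde{\mathbf{x}}, \Tilde{y}) \sim \mathcal{U}$ is precisely the object whose false-positive event (being flagged as mislabelled) we wish to control.

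The conclusion $\Tilde{s} \le \lambda_\alpha$ holding with probability at least $1-\alpha$ then follows verbatim from Theorem~\ref{thm:conformal}, and the only substantive thing to verify is that a false positive corresponds to the complementary event $\Tilde{s} > \lambda_\alpha$. Concretely, since a sample is declared mislabelled exactly when its score exceeds $\lambda_\alpha$, the guarantee $\mathbb{P}(\Tilde{s} \le \lambda_\alpha) \ge 1-\alpha$ says that a genuinely correctly-labelled test sample is flagged as mislabelled with probability at most $\alpha$, which is the claimed false-positive bound.

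The main obstacle, and the step deserving the most care, is the \emph{exchangeability justification} rather than any calculation. Theorem~\ref{thm:conformal} invokes exchangeability of $\Tilde{\mathbf{x}}$ with the other samples drawn from $\mathcal{U}$ (this is step~\eqref{eq:eq_conformal} in its proof). I would therefore argue explicitly that the correctly-labelled samples together with the new correctly-labelled test sample form an exchangeable collection, which holds because the mislabel scores are exchangeable under permutation of indices even though graph samples are not i.i.d., as emphasized in the overview of this appendix. I would also flag the mild technical points that $N_U = N(1-p)$ and $N_V = Np$ should be integers (guaranteed since $Np$ is the total mislabel count), and that the confidence range $\alpha \in (\frac{1}{N+1}, 1)$ inherited from Theorem~\ref{thm:conformal} ensures $B_\alpha$ is a valid index into the sorted scores, so $\lambda_\alpha$ is well-defined.
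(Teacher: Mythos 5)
Your proposal is correct and follows exactly the paper's own proof: the paper likewise proves this proposition by applying Theorem~\ref{thm:conformal} with $\mathcal{U}$ as the distribution of correctly labelled samples and $\mathcal{V}$ as that of mislabelled samples, so that $N_U = N(1-p)$ and $N_V = Np$ yield the stated $B_\alpha$. Your added remarks on exchangeability and on $B_\alpha$ being a valid index are sensible elaborations of points the paper leaves implicit, but they do not change the argument.
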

\begin{proof}
We apply Theorem \ref{thm:conformal} to the score function $s$, with $\mathcal{U}$ representing the distribution of correctly labelled samples, and $\mathcal{V}$ representing the distribution of mislabelled samples. 
\end{proof}

\paragraph{Discussion} This result allows us to set the threshold $\lambda_\alpha$ in a principled way which provides guarantees on the probability of a false positive (i.e. mistakenly classifying a correctly labelled sample as mislabelled). 

\begin{proposition}[False Negative Guarantee]
Define the modified score function $s' := (1-s)\cdot \mathds{1}_{\{ s > 0.5 \}}$. Then letting $\lambda_\alpha := s'_{(B_\alpha)}$, \text{ where } $B_\alpha = \left\lceil(Np+1)(1- \alpha) + N(1-p) \right\rceil$, with probability at least $1-\alpha$ over the random choice of a new sample $(\Tilde{\mathbf{x}}, \Tilde{y})\sim \mathcal{V}$ and with $\Tilde{s}'$ as its modified score, we have:
\begin{align*}
\Tilde{s}' = (1-\tilde{s})\cdot \mathds{1}_{\{ \tilde{s} > 0.5 \}} \le \lambda_\alpha,
\end{align*}
\end{proposition}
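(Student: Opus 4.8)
The plan is to derive the False Negative Guarantee as a direct corollary of Theorem~\ref{thm:conformal}, exactly mirroring the structure of the False Positive proof but applied to the modified score function $s'$ and with the roles of the two mixture components swapped. First I would set up the mixture so that $\mathcal{U}$ now represents the distribution of \emph{mislabelled} samples (so $N_U = Np$) and $\mathcal{V}$ represents the distribution of correctly labelled samples (so $N_V = N(1-p)$); this swap is precisely why the definition of $B_\alpha$ here carries $Np+1$ in the first term and $N(1-p)$ in the second, as opposed to the False Positive case. The newly drawn sample $(\Tilde{\mathbf{x}},\Tilde{y}) \sim \mathcal{V}$ in the proposition is drawn from the mislabelled distribution, which under this relabelling is the $\mathcal{U}$ of the theorem.

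Second, I would verify that $s'$ is a legitimate score function to which Theorem~\ref{thm:conformal} applies, i.e. that exchangeability is preserved. Since $s' = (1-s)\cdot \mathds{1}_{\{s > 0.5\}}$ is a fixed deterministic transformation applied coordinatewise to each sample's original score, and the original scores are exchangeable, the transformed scores $\{s'^{(i)}\}$ inherit exchangeability; this is all that conformal prediction requires. Hence the conclusion of Theorem~\ref{thm:conformal}, with $s$ replaced by $s'$ and with $N_U = Np$, $N_V = N(1-p)$, yields directly that $\tilde{s}' \le \lambda_\alpha = s'_{(B_\alpha)}$ with probability at least $1-\alpha$ over the draw from the mislabelled distribution, which is exactly the claimed bound.

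The conceptual step worth making explicit --- and the place where the modified score does real work --- is \emph{why} one bothers transforming $s$ into $s'$ at all, rather than just reapplying the False Positive argument symmetrically. A plain reapplication would control the lower tail of the mislabel score among mislabelled samples, but the natural false-negative event is that a mislabelled sample receives a \emph{low} mislabel score and is thereby missed. By defining $s' = (1-s)\cdot \mathds{1}_{\{s > 0.5\}}$, a mislabelled sample that is correctly flagged (score above $0.5$) is mapped to a small value $1-s$, while samples scoring below $0.5$ are collapsed to $0$; controlling the \emph{upper} tail of $s'$ via the theorem then translates back into a guarantee that genuinely mislabelled points are not assigned scores too far below the detection threshold. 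I would close by noting that this clamping at $0.5$ is what tightens the bound by exploiting the classifier's accuracy, as remarked in the comparison to \citet{xiong2022birds}.

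I expect the main obstacle to be purely expository rather than technical: the substitution $N_U \mapsto Np$, $N_V \mapsto N(1-p)$ must be stated carefully so the reader sees that the $B_\alpha$ in the proposition is literally the $B_\alpha$ of the theorem under this identification, and the interpretation of the modified-score event as the false-negative event must be spelled out, since the reduction itself is otherwise a one-line invocation of Theorem~\ref{thm:conformal}.
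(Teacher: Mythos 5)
Your proposal is correct and follows essentially the same route as the paper: the paper's proof is a one-line invocation of Theorem~\ref{thm:conformal} applied to the modified score $s'$ with $\mathcal{U}$ and $\mathcal{V}$ swapped so that $\mathcal{U}$ is the mislabelled distribution ($N_U = Np$, $N_V = N(1-p)$), which is exactly your reduction. Your added remarks --- that exchangeability is preserved under the deterministic coordinatewise map $s \mapsto s'$, and the explanation of why the clamping at $0.5$ makes a false-negative bound attainable --- are correct and match the paper's surrounding discussion, though the paper leaves the former implicit.
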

\begin{proof}
In this case, we similarly apply Theorem \ref{thm:conformal} to the modified score function $s'$, but now with $\mathcal{U}$ representing the distribution of mislabelled samples, and $\mathcal{V}$ representing the distribution of correctly labelled samples. 
\end{proof}

\paragraph{Discussion} One difference between $s'$ and $s$ is that the direction of $s'$ is reversed compared to $s$; this is minor and just for ease of interpretation, by making higher scores typically more unlikely.  The main difference between them is that $s'$ is nonzero only when the mislabel classifier predicts the sample to be mislabelled. Using this modified score makes sense since we are primarily interested in setting an appropriate threshold for mislabelled samples, using the distribution of mislabelled samples. Meanwhile, note that it is impractical to achieve false negative bounds using the original scores $s$, due to the large majority of correctly labelled samples which would be expected to have low mislabel scores. In contrast, $s'$ is able to map these samples to $0$ as long as they are correctly classified by the mislabel classifier, allowing a tighter bound.

\paragraph{Remark}
To show the effectiveness of our theoretical guarantees, we calculate the actual false positive rate of our \method's predictions on \texttt{OGB-arxiv} and compare it to the theoretical bounds in Figure \ref{fig:fp-guarantee}.

\begin{figure}[h]
    \centering
    \includegraphics[width=\linewidth]{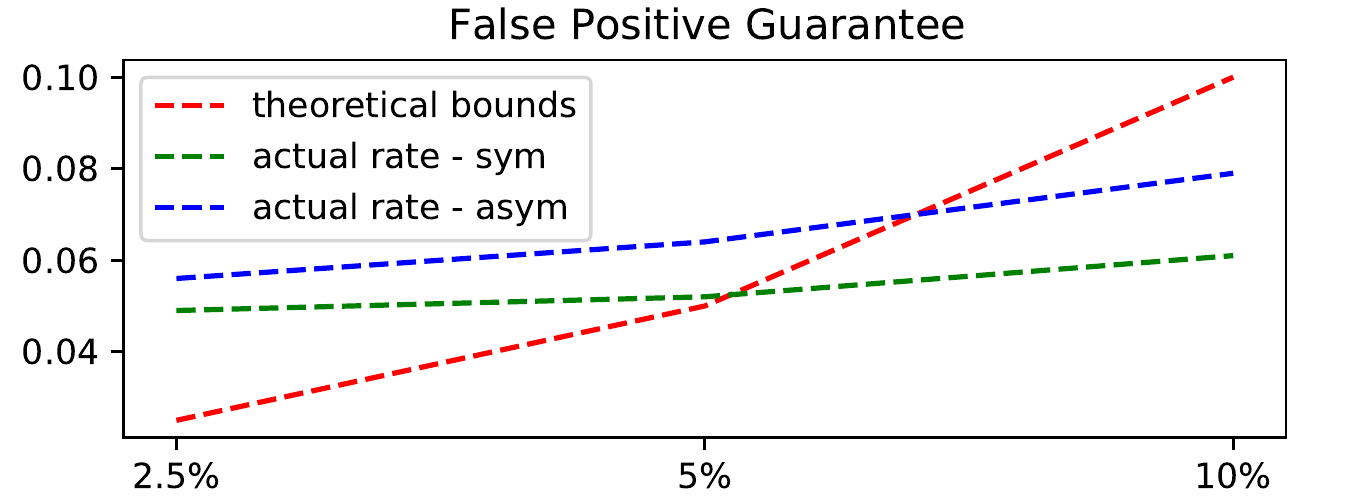}
    \caption{The plot of the theoretical and actual false positive rate under both symmetric and asymmetric noise scenarios. The actual rate is calculated based on the experiments on \texttt{OGB-arxiv}.}
    \label{fig:fp-guarantee}
\end{figure}

\section{Experimental Setup}
\label{app:exp}

\paragraph{Experimental Setup} In order to derive datasets with ground truth labels indicating whether a sample is mislabelled, we randomly introduce artificial noise to $\epsilon$ fraction of the training, validation and test set. We then further assume that the `actual' mislabel ratio of the corrupted dataset is $\epsilon$. 
We follow the practice of INCV~\cite{chen2019understanding} to introduce two mislabelling types in our experiment: symmetric and asymmetric setting, where the probability of changing one class to any other class is equal or different. 
In the asymmetric setting, we simply change class $i$ to class $(i+1) \mod c$. Specifically, if mislabel ratio $\epsilon$ is $0.1$, then for $n_i$ nodes belonging to class $i$, $0.9n_i$ will remain class $i$, while the rest $0.1n_i$ will be mislabelled as class  $(i+1) \mod c$. Illustration for these two mislabelling types is in Figure \ref{fig:mislabel_types}. 

\begin{figure}[htbp]
  \centering
  \begin{minipage}[b]{.46\linewidth}
    \centering
    \includegraphics[width=\linewidth]{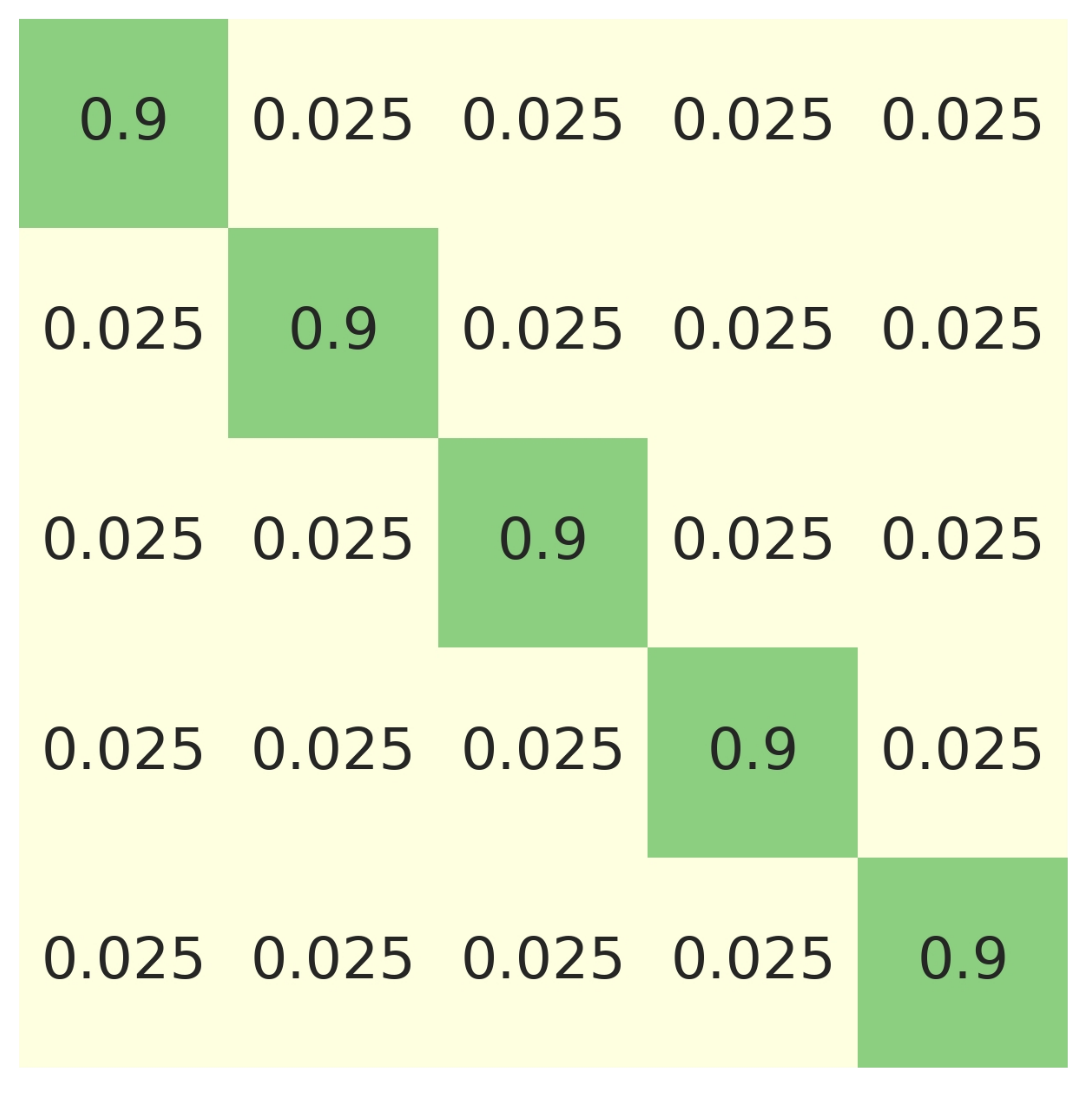}
  \end{minipage}
  \hspace*{1em}
  \begin{minipage}[b]{.46\linewidth}
    \centering
    \includegraphics[width=\linewidth]{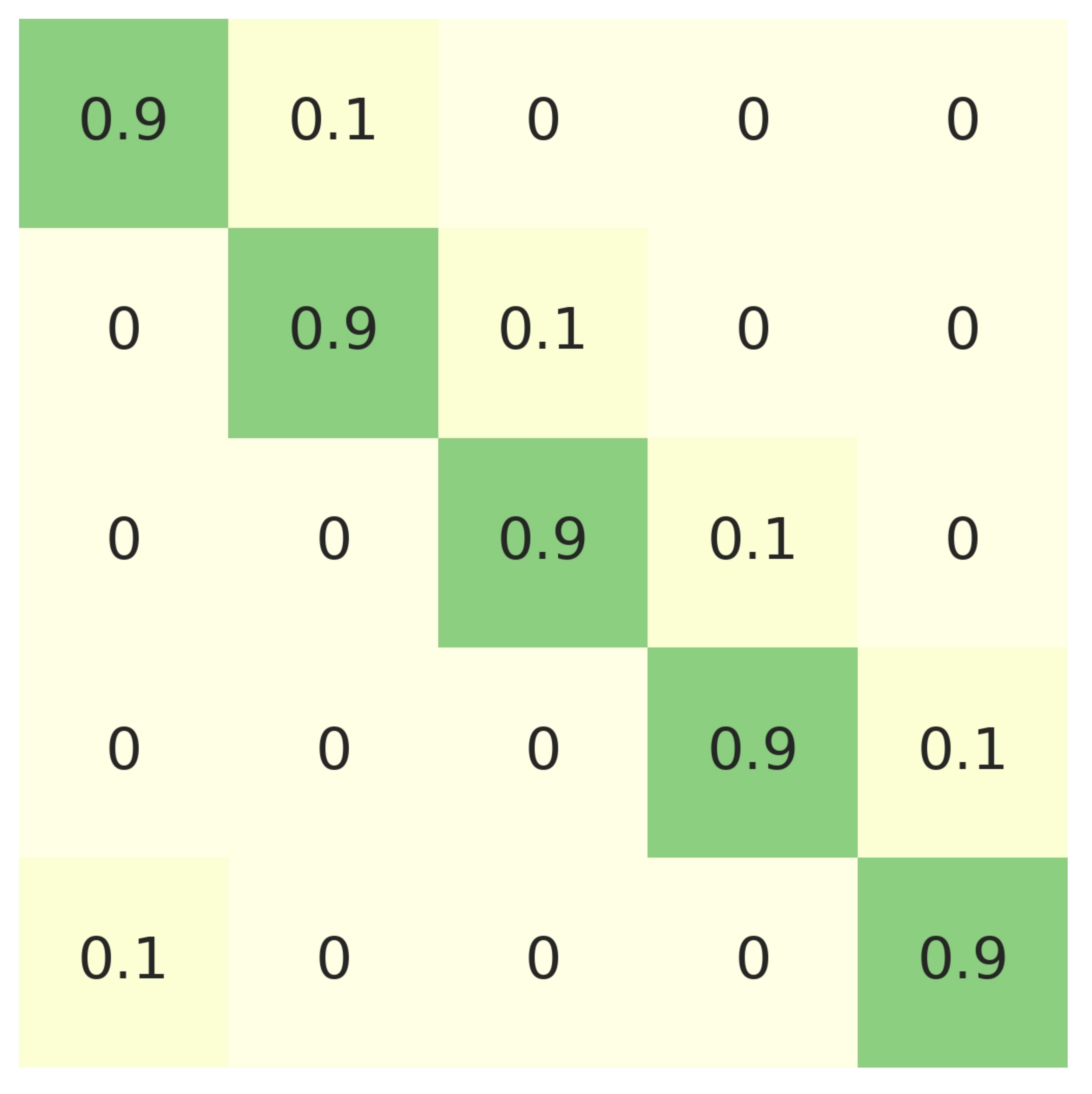}
  \end{minipage}
  \caption{Examples of symmetric (left) and asymmetric (right) mislabelling types (taking 5 classes and mislabel ratio $\epsilon$ 0.1 as an example).}
  \label{fig:mislabel_types}
\end{figure}

We test three mislabel rates $\epsilon$ to show that our \method can tackle various mislabel severities, each with two mislabel types to show that \method consistently performs well under different mislabel patterns. Specifically, $\epsilon$ is set as 0.1, 0.05, 0.025, because the max test set error of different benchmarks reported in \citet{northcutt2021pervasive} is 10.12\%. We do admit that there can be some unknown noisy nodes in $\mathcal{V}_{\text{test}}$. But it is safe to assume that this unknown mislabel rate is minor. In experiments, we compare all methods on the same manually corrupted test set, which is fair. 

\paragraph{Threshold Selection} Our mislabel detector is a binary classifier. Usually, $0.5$ is used as the threshold for binary classifiers' output. But in our case, the training stage is based on a balanced dataset, while the test stage is conducted under an imbalanced setting, which is expected to have only a few percent of mislabelled data. In consideration of this, we adjust the threshold according to Bayes rule. Let $\mathbf{P}_s$ and $\mathbf{P}_t$ denote the binary classifier's predictions on training and test set, then we can have the following equations:
\begin{align}
\label{eq:data-dist}
    \mathbf{P}_s(y|x) = \frac{\mathbf{P}_s(x|y) \mathbf{P}_s(y)}{\mathbf{P}_s(x)},
    \mathbf{P}_t(y|x) = \frac{\mathbf{P}_t(x|y) \mathbf{P}_t(y)}{\mathbf{P}_t(x)}.
\end{align}
We assume that the only change between training and test set is the class (mislabelled or not) distribution, then we have $\mathbf{P}_s(x|y)=\mathbf{P}_t(x|y)$. Dividing the above two equations and plugging $\mathbf{P}_s(x|y)=\mathbf{P}_t(x|y)$ in will yield:
\begin{align}
    \mathbf{P}_t(y|x) = \mathbf{P}_s(y|x) \cdot \frac{\mathbf{P}_t(y)}{\mathbf{P}_s(y)} \cdot \frac{\mathbf{P}_s(x)}{\mathbf{P}_t(x)},
\end{align}
where predictions on test set depend on training set data distribution $\mathbf{P}_s(y|x)$, some class-dependent scaling factor $\frac{\mathbf{P}_t(y)}{\mathbf{P}_s(y)}$, and one constant value $\frac{\mathbf{P}_s(x)}{\mathbf{P}_t(x)}$ which only replies on $x$. This indicates that we can adjust the threshold by $\frac{\mathbf{P}_t(y)}{\mathbf{P}_s(y)}$, an expected value of mislabel proportion. Since the average label error reported in \citet{northcutt2021pervasive} is  $3.4\%$, we simply set the threshold as $0.97$. All our experiments and case studies use this threshold.

\paragraph{Dataset} 6 datasets for node classificatoin are chosen: \texttt{Cora}, \texttt{CiteSeer} and \texttt{PubMed}~\cite{yang2016revisiting}, \texttt{Computers} and \texttt{Photo}~\cite{shchur2018pitfalls}, \texttt{OGB-arxiv}~\cite{hu2020ogb}. 
\texttt{Cora}, \texttt{CiteSeer}, \texttt{PubMed} and \texttt{OGB-arxiv} are citation networks where nodes represent documents and edges represent citation links. \texttt{Computers} and \texttt{Photo} are Amazon co-purchase networks where nodes represent goods and edges represent that two goods are frequently bought together.



\section{Hyperparameters}
\label{app:hyperparameters}

The maximum neighborhood size $K$ determines the range of neighborhood we consider. To investigate the robustness of \method to $K$, we vary $K$ from 1 to 5 with other parameters fixed. 

Figure \ref{fig:ablation_k} shows that our \method is insensitive to the hyperparameter $K$ and even setting $K=1$ yields decent performance, suggesting that using the information of direct neighbors is sufficient in most cases.

\begin{figure}[h]
    \centering
    \includegraphics[width=\linewidth]{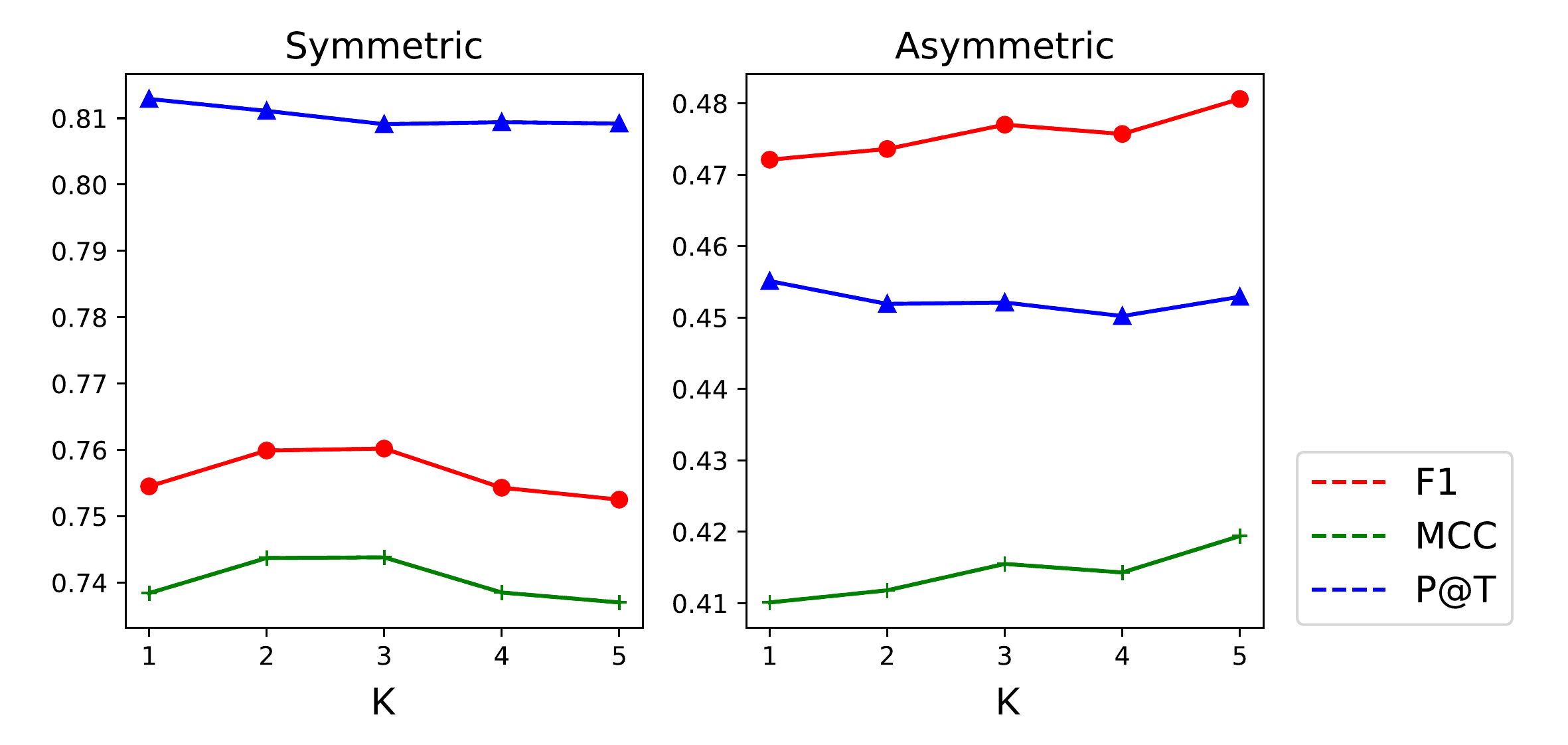}
    \caption{Sensitivity of hyperparameter $K$. Experiments are performed using GCN on \texttt{OGB-arxiv} with 10\% noise.}
    \label{fig:ablation_k}
\end{figure}

\section{Mislabel Case Studies}
\label{app:case_study}

\paragraph{Supplement to Findings}
Different datasets have different levels of label ambiguity: e.g., the classes in \texttt{PubMed} are more objective and precisely defined than those in \texttt{OGB-arxiv}.

Many `ambiguous' and `likely non-mislabel' samples span multiple categories: e.g., a paper predicting financial indicators using methods from natural language processing could reasonably belong to the `finance' or `language' categories.

\paragraph{Limitations} While most samples were straightforward to categorize, we acknowledge that some manual judgments\footnote{For transparency, we include all manual judgments in our attached code repository.} involved unavoidable subjectivity. Even so, our goals in this section are mainly to understand broad overall trends and differences, and our overall findings are relatively robust to small variations. Such subjectivity could be reduced by employing a larger number of manual raters, but this process is fairly labor-intensive. Crowdsourcing could be employed, but is challenging as the task requires some expertise in order to read and categorize papers. Another point is about the inference stage. The focus of our \method is not correcting mislabels. We check the accuracy of the inferred labels of clearly mislabelled samples in \texttt{Cora}, \texttt{CiteSeer} and \texttt{OGB-arxiv}, getting results of $75\%, 100\%$ and $66.67\%$, which is acceptable but still has room for improvement.

\paragraph{Mislabel Rate Estimation on \texttt{PubMed}} 
Can we loosely estimate the total amount of label noise in \texttt{PubMed}? How do these samples affect the use of \texttt{PubMed} for evaluating algorithm performance? We focus on \texttt{PubMed} due to the presence of some auxiliary information that helps us to answer these questions.

Since \texttt{PubMed} contains 19717 samples, estimating the number of mislabels via manual checking is clearly infeasible. Fortunately, it is possible to exactly count the number of samples with label noise of a different kind, as we will explain. Concretely, we use the PubMed API\footnote{https://www.ncbi.nlm.nih.gov/home/develop/api/}, which allows us to query the raw `keywords', known as `MeSH terms'\footnote{MeSH terms are a `vocabulary' or ontology used in PubMed.} associated with each paper. The 3 labels used in the \texttt{PubMed} dataset were assigned based on 3 MeSH terms: `Diabetes Mellitus, Experimental', `Diabetes Mellitus, Type 1', and `Diabetes Mellitus, Type 2'. Thus, querying the PubMed API tells us which papers are assigned with 2 or more of these MeSH terms. For clarity, we refer to these samples as `multi-labelled'. For such cases, it turns out that the \texttt{PubMed} dataset was generated by discarding all but one of such labels for each sample. We note that there is no possibility that the single label kept is somehow indicative of the `most correct' class, since the label to be kept is simply chosen alphabetically. This is a form of label noise or ambiguity, and the number of such samples can be counted exactly via the PubMed API. 

We find that 71 papers are assigned 0 labels (i.e., MeSH terms), 18351 papers are assigned 1 label, 1256 papers are assigned 2 labels, and 39 papers are assigned 3 labels. The latter two categories are multi-labels, while the 0-label papers occur due to updates in the MeSH terms assigned to some papers over time. In summary, we have 1366 papers with some form of label noise, or $6.91\%$ of the entire dataset. Note that this is only a lower bound for the true amount of label noise, as it does not include mislabels such as those in Figure \ref{fig:mislabels}.

\paragraph{Effect of Correcting Label Noise} What are the implications of these $6.91\%$ label noise samples? The top reported leaderboard scores on \texttt{PubMed}\footnote{https://paperswithcode.com/dataset/pubmed} are close to $90\%$ accuracy, suggesting that a significant fraction of the remaining `missing accuracy' could be attributed to label noise. This supports a `data-centric' view, which recognizes data quality (e.g., label noise) as an important factor affecting performance and evaluation, and suggests that correcting mislabels has scope for significant value.

Finally, we study the effect of correcting label noise, by comparing the accuracy of a simple GCN+mixup baseline before and after removing the above-mentioned noisy-labelled samples in the test set. The accuracy improves from $86.71\%$ to $89.11\%$, suggesting that label noise has significant effects on performance evaluation. Thus, recognizing the importance of correcting this label noise, we publicly release two new variants of the \texttt{PubMed} dataset: 1) \texttt{PubMedCleaned}, which removes these noisy-labelled samples, and also corrects all mislabels we have detected; 2) \texttt{PubMedMulti}, which keeps multi-labelled samples but explicitly assigns them multiple labels, for users to develop algorithms which can handle the multi-labelling scenario. Please refer to our code link for these two new datasets and the manual judgement files for case studies.

\end{document}